\documentclass[preprint,12pt]{elsarticle}

\usepackage[ruled,vlined]{algorithm2e}
\usepackage{url}
\usepackage{algorithmic} 
\usepackage{booktabs} 
\usepackage{amssymb}
\usepackage{amsthm}
\usepackage{lipsum}
\usepackage{multirow}
\usepackage[numbers]{natbib}  
\usepackage{amsmath}
\usepackage{natbib}  

\theoremstyle{definition}

\newtheorem{theorem}{Theorem}[section]

\journal{Nuclear Physics B}

\begin{document}

\begin{frontmatter}



\title{PUID: A Personalized Deconfounding Framework for Recommender Systems under Hidden Confounding} 

\author{Zongyu Li \\
    School of Computer Science, \\
    Guangdong University of Technology, \\
    Guangzhou, China \\
    gdutzongyuli@gmail.com}

\begin{abstract}

Recommender systems often rely on observational user–item interaction data, which is prone to selection bias due to users’ selective interactions with items. While techniques such as inverse propensity weighting (IPW) and doubly robust estimators are effective in addressing selection bias from observed confounding, they become unreliable when hidden confounding exists, meaning that there are confounders that influence both user clicks and feedback but are not observable (e.g., user salary). Existing approaches relying on randomized controlled trials (RCTs) or global sensitivity bounds are constrained in practice: RCTs demand costly experimental data, while global sensitivity bounds presume a uniformly bounded effect of unmeasured confounders on propensities through sensitivity analysis, thereby neglecting heterogeneity across user–item interactions. To overcome this limitation, we propose a novel framework, Personalized Unobserved-Confounding-aware Interaction Deconfounder (PUID), which estimates user–item level sensitivity bounds, thereby substantially relaxing the homogeneity assumption inherent in global sensitivity bounds. Under mild assumptions, we use an entropy-based method to estimate the individualized strength of hidden confounding. Specifically, if observed user and item features can already well predict the exposure status (i.e., the mutual information is small), then the influence of hidden confounding is assumed to be small. To ensure both robustness and predictive accuracy, we further develop an adversarial optimization strategy and propose a benchmark-guided variant (BPUID) that incorporates pre-trained models as stabilizing references. Extensive experiments on three real-world datasets demonstrate that our approach consistently outperforms state-of-the-art baselines under hidden confounding, without requiring RCT data.

\end{abstract}

\begin{keyword}


Recommendation Systems \sep Selection bias \sep Hidden confounding \sep Sensitivity analysis

\end{keyword}

\end{frontmatter}




\section{Introduction}

Recommender systems are widely used to deliver personalized content
across various applications, such as e-commerce platforms (e.g., Alibaba,
Amazon), social media (e.g., Facebook, Weibo), and lifestyle services (e.g.,
Yelp, Meituan)~\citep{chen2023bias}. A common characteristic of these systems is that users typically interact with only a small portion of available items. For example, in movie recommendation, users often choose to rate the movies they enjoyed, while ignoring those they disliked~\cite{zhang2019deep}. Similarly, advertising systems tend to display only those ads that are predicted to match user interests, while suppressing others. These selective exposure mechanisms, if not properly addressed, can introduce systematic bias.

Due to the high sparsity of user–item interactions in recommender systems, a substantial amount of feedback data is missing. Early approaches commonly adopted the Missing At Random (MAR) assumption, under which the probability of a rating being missing does not depend on its unobserved value. However, this assumption rarely holds in practice. In reality, missing data in recommender systems are often Missing Not At Random (MNAR): whether a user provides feedback often depends on how much they like or dislike the item~\cite{marlin2012collaborative}. For instance, users are more likely to rate items they enjoy. This means that the probability of observing a rating, known as the propensity, varies significantly across items and users. Recent studies have shown that explicitly modeling the MNAR nature of data can help improve the quality of recommended items~\cite{wang2018modeling}.

Given that ratings are Missing Not At Random (MNAR), calculating averages solely based on observed ratings can lead to substantial bias, potentially over- or under-estimating prediction errors by a large amount, commonly referred to as bias~\cite{little2019statistical}.  For example, in a movie recommendation system, users tend to rate only the movies they like, while rarely rating those they dislike~\citep{pradel2012ranking}. Similarly, ad placement systems selectively display advertisements believed to match user interests, while suppressing less relevant ones. Such selection mechanisms, if unadjusted, may introduce confounding bias by over-recommending popular items regardless of their actual relevance, a phenomenon known as selection bias~\cite{wei2021model}.

To correct for selection bias, several well-established strategies have been proposed. These include error imputation~\cite{chang2010training,hernandez2014probabilistic,steck2010training}, inverse propensity weighting (IPW)~\cite{imbens2015causal,luo2021unbiased,saito2020doubly,schnabel2016recommendations}, and doubly robust estimation~\cite{keohane2009counterfactuals,saito2020doubly,wang2019doubly}.

\begin{itemize}
    \item The error-imputation-based (EIB) approach estimates the prediction error for each missing rating and combines these imputed errors with observed errors to train or evaluate models ~\cite{steck2013evaluation}. However, this method often suffers from bias due to imputation inaccuracy~\cite{dudik2011doubly}, and such errors can accumulate during model training.
    
    \item The inverse propensity scoring (IPS) method, on the other hand, assigns higher importance to ratings that are less likely to be observed, by weighting observed errors inversely by their propensities~\cite{schnabel2016recommendations}, which refer to the probabilities of different ratings being observed and vary across user–item interactions. Although IPS can reduce bias in expectation, it tends to suffer from high variance: when propensities are very small, the inversely weights become large and unstable, potentially causing training losses to fluctuate and harming generalization performance~\cite{thomas2016data}.
    
    \item The doubly robust (DR) estimator combines the strengths of both approaches and achieves the key advantage of double robustness, meaning it remains consistent if either the imputation model or the propensity model is correctly specified. By integrating imputed errors and propensity weights in a unified framework, the DR estimator enables unbiased estimation while mitigating the instability caused by high variance in propensities~\cite{wang2019doubly}.
\end{itemize}

These approaches are effective under the assumption that all confounders are observed. However, they fail in the presence of unobserved confounders. These include factors such as user intent, financial status, or demographic characteristics that simultaneously influence both item exposure and user feedback. This issue, known as hidden confounding, is widespread in real-world recommendation scenarios~\cite{ding2022addressing,li2023removing,li2023balancing}. Therefore, training to unbiasedly evaluate recommendation models in the presence of hidden confounding has become both a necessary and challenging paradigm. When hidden confounders exist, the propensity score estimated using only observed confounders deviates from the true propensity score \cite{ding2022addressing}, formally expressed as: 
\[
P(o_{u, i}=1 \mid x_{u, i}) \neq P(o_{u, i}=1 \mid x_{u, i}, h_{u, i}),
\] where $x_{u, i}$ is the covariate of user $u$ and item $i$, $o_{u, i}$ indicates whether the interaction between user $u$ and item $i$ is observed, and $h_{u, i}$ represents hidden confounders.

To address hidden confounding bias in recommender systems, existing approaches can be broadly categorized into two classes: sensitivity analysis-based methods and data fusion-based methods.

Sensitivity analysis-based methods assume that the strength of hidden confounding can be bounded, such that the inverse of the true propensity scores lie within a range defined by the estimated propensity scores. These methods typically adopt worst-case optimization strategies to provide robustness against unmeasured confounding~\cite{ding2022addressing}. A representative example is the Robust Deconfounder (RD)~\cite{ding2022addressing}, which addresses the effect of unmeasured confounders on propensities under the mild assumption that such effects are bounded. RD estimates this bound via sensitivity analysis and incorporates it into an adversarial learning framework, thereby training a recommender model that remains robust to unmeasured confounding within the specified bound. Consequently, it may yield overly conservative or overly loose sensitivity bounds for certain users depending on factors such as their profile completeness.

Data fusion-based methods leverage auxiliary experimental data collected from randomized controlled trials (RCTs) or A/B tests to calibrate propensity and imputation models, thereby mitigating hidden confounding bias~~\cite{li2023balancing,xiao2024addressing}. For example, a balancing method has been proposed to leverage unbiased ratings~\cite{li2023balancing} obtained from RCTs to adjust for unobserved confounding and enhance estimation accuracy, although such approaches entail costly and sometimes ethically challenging data collection~\cite{jiang2024confounder,ma2022learning}.

To bridge above gaps, we propose a novel framework named Personalized Unobserved-Confounding-aware Interaction Deconfounder(PUID) that can quantify hidden confounding strength by estimating user-item-specific sensitivity
bounds. Specifically, we adopts a sensitivity analysis framework to estimate a personalized bound of the nominal propensity scores for each user-item interaction, based on observed data. Then, to infer confounding strength, we introduce an entropy-based assumption that quantify the unmeasured confounding contribution to observable entropy reduction. Morever, we design a benchmark-guided extension (BPUID) that incorporates a pre-trained model into learning as a stabilizing reference, thereby mitigating the robustness–accuracy trade-off commonly seen in worst-case optimization. The main contributions of this paper are summarized as follows:
\begin{itemize}
    \item To the best of our knowledge, this is the first work to model personalized hidden confounding strength via user-item-specific bounds, enabling personalized estimation of confounding strength.
    \item We develop an entropy-based sensitivity estimation method using discretized information gain, allowing the estimation of confounding strength from purely observational data.
    \item We design an adversarial training strategy and introduce a benchmark-guided extension (BPUID), which together ensure robustness and predictive performance under hidden confounding.
\end{itemize}

\section{Related Work}

\subsection{Classic Approaches to Selection Bias Correction}

To address selection bias in recommendation, prior work has developed three primary categories of estimators: error-imputation-based (EIB)~\cite{chang2010training,hernandez2014probabilistic,steck2010training}, inverse propensity scoring (IPS)~\cite{imbens2015causal,luo2021unbiased,saito2020doubly,schnabel2016recommendations}, and doubly robust (DR) methods~\cite{keohane2009counterfactuals,saito2020doubly,wang2019doubly}. The error-imputation-based (EIB) estimator approximates the prediction error for unobserved ratings and combines these imputed errors with observed errors to train models. While EIB is practical, it is prone to bias due to imputation inaccuracies, which can accumulate and propagate during training. The inverse propensity scoring (IPS) estimator addresses selection bias by weighting observed errors inversely proportional to their exposure propensities, defined as the probability of each rating being observed for a given user–item pair. Although IPS yields unbiased estimates in expectation under correctly specified propensities, it is susceptible to high variance when propensities are small, potentially destabilizing training and impairing generalization. The doubly robust (DR) estimator integrates both approaches, combining imputed errors with propensity weights, and achieves the property of double robustness: it remains consistent if either the imputation model or the propensity model is correctly specified, while mitigating the high-variance issues associated with IPS. Nevertheless, the presence of unmeasured confounders violates the underlying assumptions of IPS and DR, thereby undermining their theoretical guarantees and potentially inducing systematic bias in loss estimation.

\subsection{Data fusion-based Approaches}

Data fusion-based approaches aim to mitigate confounding by incorporating auxiliary data sources. For instance, KDRec~\cite{liu2020general} adopts a knowledge distillation framework where a teacher model is first trained on a small set of uniform data, and its predictions are then transferred to a student model trained on biased data. However, since uniform data is often limited in quantity and collected at the cost of user experience, the teacher model suffers from high variance, reducing its reliability. Moreover, the approach lacks theoretical justification, and how the knowledge transfer offsets bias remains unclear.

Several methods attempt to leverage a small amount of unbiased data to improve debiasing performance. Wang et al.~\cite{wang2021combating} propose a bi-level optimization framework, where a small unbiased subset is used to adaptively learn propensity weights for biased data. Similarly, AutoDebias~\cite{chen2021autodebias} applies meta-learning to optimize debiasing parameters by making use of uniform data, improving the performance of existing IPS and DR-based methods.

InterD~\cite{ding2022interpolative} introduces a knowledge distillation framework that integrates both biased and debiased models as teachers, and interpolates their predictions based on teacher confidence. Although it improves recommendation quality, it does not explicitly account for hidden confounding.

Only recently have some works begun to address hidden confounding in recommendation. Li et al.~\cite{li2023balancing} propose a model-agnostic balancing method that uses a few unbiased ratings to iteratively adjust model parameters learned from biased data and adaptively reweight samples, mitigating unobserved confounding and model misspecification. In a related effort, Xiao et al.~\cite{xiao2024addressing} present MetaDebias, which leverages heterogeneous observational datasets, in which only a portion of the data is affected by hidden confounding, and applies meta-learning to achieve unbiased learning.

\subsection{Sensitivity analysis-based Approaches}
Robust Deconfounder (RD)~\cite{ding2022addressing} is a sensitivity analysis-based method that accounts for the influence of unmeasured confounders on exposure propensities under the mild assumption that this influence is bounded. RD estimates this bound using sensitivity analysis and learns a recommender model robust to unmeasured confounding within the bound through adversarial learning. Existing implementations apply a single global bound for all user–item pairs, which may inadequately capture the heterogeneity of confounding effects. In practice, the magnitude of hidden confounding varies with the completeness of user and item attributes: entities with more complete features tend to experience minimal confounding, whereas those with sparse attributes are more susceptible to substantial confounding. Consequently, employing a uniform bound across all entities can lead to underestimation of confounding for sparse-attribute entities and overestimation for rich-attribute ones.

\section{Preliminaries}

\begin{table}[htbp]
\centering
\caption{Summary of Notations Used Throughout the Paper}
\label{tab:notation_summary}
\begin{tabular}{ll}
\toprule
\textbf{Notation} & \textbf{Description} \\
\midrule
$P$, $\mathbb{E}$ & Probability distribution and expectation \\
$\mathcal{D}$ & Set of all user-item pairs \\
$\mathcal{O}$ & Set of observed interactions: $\{(u,i) \mid o_{u,i}=1\}$ \\
$o_{u,i}$ & Binary exposure variable (1 if observed) \\
$r_{u,i}$ & True rating or outcome for user $u$ on item $i$ \\
$x_{u,i}$ & Observed features for interaction $(u,i)$ \\
$h_{u,i}$ & Unobserved confounding for interaction $(u,i)$ \\
$x_u, x_i$ & User-specific and item-specific features \\
$\hat{r}_{u,i}$ & Predicted rating by model \\
$e_{u,i}$ & Prediction error: $|\hat{r}_{u,i} - r_{u,i}|$ or squared error \\
$\hat{e}_{u,i}$ & Estimated oracle error via imputation model \\
$\phi$, $\theta$ & Parameters of prediction model and imputation model \\
$\hat{\phi}^{(0)}$, $\hat{\theta}^{(0)}$ & Benchmark estimates from base model \\
\midrule
$p_{u,i}$ & Nominal propensity: $P(o_{u,i}=1 \mid x_{u,i})$ \\
$\tilde{p}_{u,i}$ & True propensity: $P(o_{u,i}=1 \mid x_{u,i}, h_{u,i})$ \\
$\hat{p}_{u,i}$ & Estimated nominal propensity score \\
$\tilde{w}_{u,i}$ & Inverse of true propensity: $1/\tilde{p}_{u,i}$ \\
\midrule
$H(X)$ & Shannon entropy of random variable $X$ \\
$H(Y \mid X)$ & Conditional entropy of $Y$ given $X$ \\
$\mathrm{IG}(Y; X)$ & Information gain: $H(Y) - H(Y \mid X)$ \\
$\mathrm{IG}_h$ & Information gain attributed to hidden confounding \\
\midrule
$\Gamma$ & Global sensitivity parameter \\
$\Gamma_{u,i}$ & Personalized sensitivity parameter for interaction $(u,i)$ \\
$a_{u,i}$, $b_{u,i}$ & Lower and upper bounds on $\tilde{w}_{u,i}$ under global $\Gamma$ \\
$\tilde{a}_{u,i}$, $\tilde{b}_{u,i}$ & Personalized bounds on $\tilde{w}_{u,i}$ under $\Gamma_{u,i}$ \\
\midrule
$\tilde{W}$ & Global uncertainty set of inverse weights \\
$\mathcal{W}_\text{P}$ & Personalized uncertainty set with bounds $[\tilde{a}_{u,i}, \tilde{b}_{u,i}]$ \\
\midrule
$\mathcal{L}_{\text{RD-IPS}}$ & Robust IPS objective under RD framework \\
$\mathcal{L}_{\text{RD-DR}}$ & Robust DR objective under RD \\
$\mathcal{L}_{\text{Imp}}^{\text{RD-DR}}$ & Imputation loss under RD-DR \\
$\mathcal{L}_{\text{PUID-IPS}}$ & Personalized robust IPS objective \\
$\mathcal{L}_{\text{PUID-DR}}$ & Personalized robust DR objective \\
$\mathcal{L}_{\text{Imp}}^{\text{PUID-DR}}$ & Personalized robust imputation objective \\
$\mathcal{L}_{\text{BPUID-IPS}}$ & Benchmarked PUID-IPS objective \\
$\mathcal{L}_{\text{BPUID-DR}}$ & Benchmarked PUID-DR objective \\
$\mathcal{L}_{\text{Imp}}^{\text{BPUID-DR}}$ & Benchmarked PUID-DR imputation loss \\
\bottomrule
\end{tabular}
\end{table}

\subsection{Problem Setting and Notation}

Table~\ref{tab:notation_summary} outlines the key notations employed throughout this paper.

Let $\mathcal{U}$ and $\mathcal{I}$ denote the complete sets of users and items, respectively. We consider $\mathcal{U}_o = \{u_1, \ldots, u_m\} \subseteq \mathcal{U}$ and $\mathcal{I}_o = \{i_1, \ldots, i_n\} \subseteq \mathcal{I}$ as the observed subsets of users and items, which are randomly drawn from the underlying superpopulation. The user-item interaction space is then defined as $\mathcal{D} = \mathcal{U}_o \times \mathcal{I}_o = \{(u, i) \mid u \in \mathcal{U}_o, i \in \mathcal{I}_o\}$. For each user-item pair $(u, i) \in \mathcal{D}$, let $x_{u,i} \in \mathbb{R}^K$ denote the associated feature vector, $r_{u,i} \in \mathbb{R}$ the true (ground-truth) rating, and $o_{u,i} \in \{0,1\}$ a binary indicator that specifies whether $r_{u,i}$ is observed. The set of all observed interactions is defined as
\[
\mathcal{O} = \{(u, i) \in \mathcal{D} \mid o_{u,i} = 1 \}.
\]Let $\hat{r}_{u,i} = f(x_{u,i}; \theta_r)$ denote the prediction model, parameterized by $\theta_r$, which estimates the true rating $r_{u,i}$ based on the feature vector $x_{u,i}$. The objective is to minimize the ideal loss computed over the complete user-item interaction space:

\begin{equation}
\mathcal{L}_{\text{Ideal}}(\phi) = \frac{1}{|\mathcal{D}|} \sum_{(u,i) \in \mathcal{D}} e_{u,i},
\label{eq:ideal}
\end{equation}

\begin{equation}
e_{u,i} = \mathcal{L}(\hat{r}_{u,i}, r_{u,i}),
\label{eq:eui}
\end{equation}
where $\mathcal{L}(\cdot, \cdot)$ is a loss function such as mean squared error or cross-entropy loss.

In practice, however, the true rating $r_{u,i}$ is only partially observed, making $e_{u,i}$ inaccessible for instances where $o_{u,i} = 0$. A common workaround is to minimize the \textit{naive loss}, which considers only the observed interactions:

\begin{equation}
\mathcal{L}_{\text{Naive}}(\phi) = \frac{1}{|\mathcal{O}|} \sum_{(u,i) \in \mathcal{O}} e_{u,i}.
\label{eq:naive}
\end{equation}

Due to selection bias, this estimator deviates from the true ideal loss and yields a biased estimate~\cite{wang2019doubly}. To address this issue, many previous studies incorporate the propensity score $p_{u,i}$ to reweight the observed samples, resulting in the Inverse Propensity Scoring (IPS) loss:

\begin{equation}
\mathcal{L}_{\text{IPS}}(\phi) = \frac{1}{|\mathcal{D}|} \sum_{(u,i) \in \mathcal{D}} \frac{o_{u,i} \cdot e_{u,i}}{\hat{p}_{u,i}},
\label{eq:ips}
\end{equation}where $\hat{p}_{u,i} = \pi(x_{u,i}; \theta_p)$ denotes the estimated propensity score. The IPS estimator remains unbiased provided that $\hat{p}_{u,i}$ exactly matches the true propensity $\hat{p}_{u,i}$~\cite{schnabel2016recommendations, wang2019doubly}.

To further enhance robustness and mitigate the high variance issue of IPS, the Doubly Robust (DR) estimator has been proposed~\cite{wang2019doubly, wang2021combating, li2023balancing}, formulated as:

\begin{equation}
\mathcal{L}_{\text{DR}}(\phi,\theta) = \frac{1}{|\mathcal{D}|} \sum_{(u,i) \in \mathcal{D}} \left[ \hat{e}_{u,i} + \frac{o_{u,i} \cdot (e_{u,i} - \hat{e}_{u,i})}{\hat{p}_{u,i}} \right],
\label{eq:dr}
\end{equation}where $\hat{e}_{u,i} = m(x_{u,i}; \theta_e)$ denotes the imputed error from an auxiliary model. The DR estimator remains unbiased if either the estimated propensity scores or imputed errors are accurate. Thus, accurate estimation of propensity scores is essential for effective debiasing in both IPS and DR-based approaches.

\subsection{Selection Bias and Hidden Confounding}

Prior studies generally operate under the no hidden confounding assumption, which posits that the exposure indicator $o_{u,i}$ is conditionally independent of the potential outcome $r_{u,i}$ given the observed confounders $x_{u,i}$. In practice, however, this assumption may be violated due to the presence of unobserved confounders, such as user salary. Such violations are formalized in the second part of Equation~\ref{eq:oxh}, where $o_{u,i}$ is no longer conditionally independent of $r_{u,i}$ given only the observed confounders $x_{u,i}$.

\begin{equation}
o_{u, i} \perp r_{u, i} \,\big|\, \left(x_{u, i}, h_{u, i} \right),
\quad
o_{u, i} \not\perp r_{u, i} \,\big|\, x_{u, i},
\label{eq:oxh}
\end{equation}where the symbols $\perp$ and $\not \perp$ represent independent and not independent.

\begin{theorem}
(Propensity under hidden Confounding~\cite{ding2022addressing})
\label{thm:hidden_confounding}
In the presence of hidden confounding $h$, the standard estimators suffer from inherent bias:
(a) Both the IPS and DR estimators become biased even if $\hat{p}_{u, i}$ and $\hat{e}_{u, i}$ accurately estimate the nominal propensity $p_{u,i}$ and the imputed error $g_{u,i}$, respectively.
(b) Under this setting, the true propensity score can be defined as:

\begin{equation}
\tilde{p}_{u, i} = P\left( o_{u, i} = 1 \mid x_{u, i}, h_{u, i} \right),
\label{eq:poxh}
\end{equation}and assume that $\hat{p}_{u, i}$ accurately estimates $\tilde{p}_{u, i}$, then both the IPS and DR estimators remain unbiased. 
\end{theorem}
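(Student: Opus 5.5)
The plan is to compute the expectation of each estimator with respect to the exposure indicators $o_{u,i}$, conditioning on $(x_{u,i},h_{u,i})$ together with the potential rating $r_{u,i}$. The errors $e_{u,i}$ and the imputations $\hat e_{u,i}$ depend only on $r_{u,i}$ and $x_{u,i}$, so they can be treated as fixed. We then compare the result with $\mathcal{L}_{\text{Ideal}}$ in Equation~\eqref{eq:ideal}. The key tool is the conditional independence in the first part of Equation~\eqref{eq:oxh}. It gives
\[
\mathbb{E}[o_{u,i}\mid x_{u,i},h_{u,i},r_{u,i}] = \tilde p_{u,i},
\]
whereas $\mathbb{E}[o_{u,i}\mid x_{u,i},r_{u,i}]$ in general differs from $p_{u,i}$, by the second part of Equation~\eqref{eq:oxh}.

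For part (b), assume $\hat p_{u,i}=\tilde p_{u,i}$. The tower property then gives
\[
\mathbb{E}\!\left[\frac{o_{u,i}e_{u,i}}{\tilde p_{u,i}}\right]=\mathbb{E}\!\left[\frac{e_{u,i}}{\tilde p_{u,i}}\,\mathbb{E}[o_{u,i}\mid x_{u,i},h_{u,i},r_{u,i}]\right]=\mathbb{E}[e_{u,i}].
\]
Summing over $\mathcal{D}$ shows $\mathbb{E}[\mathcal{L}_{\text{IPS}}]=\mathcal{L}_{\text{Ideal}}$. For DR, write each summand as
\[
e_{u,i} + \Bigl(\frac{o_{u,i}}{\tilde p_{u,i}}-1\Bigr)(e_{u,i}-\hat e_{u,i}).
\]
The same conditioning makes the correction term have mean zero, so DR is unbiased for any $\hat e_{u,i}$. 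We need positivity, $\tilde p_{u,i}>0$, for all of this, and I will state it as a standing assumption.

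For part (a), take $\hat p_{u,i}=p_{u,i}=P(o_{u,i}=1\mid x_{u,i})$. The IPS bias for a single pair is then
\[
\mathbb{E}\!\left[e_{u,i}\Bigl(\frac{\tilde p_{u,i}}{p_{u,i}}-1\Bigr)\right].
\]
Since $\mathbb{E}[\tilde p_{u,i}\mid x_{u,i}]=p_{u,i}$, this equals $\mathrm{Cov}(e_{u,i},\tilde p_{u,i}\mid x_{u,i})/p_{u,i}$ in conditional expectation. This covariance is generically nonzero precisely because $h_{u,i}$ drives both $o_{u,i}$ and $r_{u,i}$. That is the content of $o_{u,i}\not\perp r_{u,i}\mid x_{u,i}$.

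For DR with $\hat e_{u,i}=g_{u,i}:=\mathbb{E}[e_{u,i}\mid x_{u,i}]$, the bias is
\[
\mathbb{E}\!\left[(e_{u,i}-g_{u,i})\Bigl(\frac{\tilde p_{u,i}}{p_{u,i}}-1\Bigr)\right],
\]
which again reduces to the same conditional covariance divided by $p_{u,i}$. It is therefore nonzero under hidden confounding, even though both nuisance models are "correct" with respect to the observed covariates.

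The main obstacle is that "become biased" cannot hold for every data-generating process. If $h_{u,i}$ happened to be uncorrelated with $e_{u,i}$ given $x_{u,i}$, the bias would vanish. So part (a) has to be read as "not unbiased in general." To make it rigorous, I would exhibit a simple explicit example. Take a binary $h$ independent of $x$, $\tilde p = \sigma(\alpha + \beta h)$ and $r = \gamma h + \varepsilon$ with $\beta\gamma\neq 0$. Then the covariance $\mathrm{Cov}(e,\tilde p\mid x)$ is strictly nonzero, which shows the identity $\mathbb{E}[\text{estimator}]=\mathcal{L}_{\text{Ideal}}$ fails for both estimators.
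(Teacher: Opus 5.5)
The paper gives no proof of this statement. It is quoted from \cite{ding2022addressing} and used only as motivation, so there is no in-paper argument to compare against.

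Your argument is the standard one and it is sound. Conditioning on $(x_{u,i},h_{u,i},r_{u,i})$ and using $o_{u,i}\perp r_{u,i}\mid(x_{u,i},h_{u,i})$ to replace $o_{u,i}$ by $\tilde p_{u,i}$ gives (b) at once. With $\hat p_{u,i}=p_{u,i}$, the same step identifies the bias in (a) as $\mathrm{Cov}(e_{u,i},\tilde p_{u,i}\mid x_{u,i})/p_{u,i}$. You are right that (a) can only mean ``not unbiased in general.''

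One wording fix: $o_{u,i}\not\perp r_{u,i}\mid x_{u,i}$ is necessary for this covariance to be nonzero, since it vanishes if $o\perp r\mid x$. It is not sufficient, so the nonzero covariance is not ``the content of'' that condition.

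Since the paper never defines $g_{u,i}$, it also helps to note the following. For $\hat p_{u,i}=p_{u,i}$ and \emph{any} $x$-measurable $\hat e_{u,i}$,
\[
\mathbb{E}\Bigl[\hat e_{u,i}+\frac{o_{u,i}(e_{u,i}-\hat e_{u,i})}{p_{u,i}}\;\Big|\;x_{u,i}\Bigr]
=\frac{\mathbb{E}[\tilde p_{u,i}e_{u,i}\mid x_{u,i}]}{p_{u,i}}
=\mathbb{E}[e_{u,i}\mid x_{u,i},o_{u,i}=1],
\]
which is exactly the conditional mean of the IPS summand. So IPS and DR share the bias $\mathbb{E}[e\mid x,o=1]-\mathbb{E}[e\mid x]=\mathrm{Cov}(e,\tilde p\mid x)/p$. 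This holds whether $g$ means $\mathbb{E}[e\mid x]$ or $\mathbb{E}[e\mid x,o=1]$; the latter is what an imputation model fit on observed data learns.

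The one step that fails as written is the counterexample. The condition $\beta\gamma\neq0$ does not force $\mathrm{Cov}(e,\tilde p\mid x)\neq0$, because $e$ also depends on the fixed prediction $\hat r$ and on the loss. Take squared loss, $h\in\{0,1\}$ with $0<P(h=1)<1$, and $\varepsilon$ mean zero and independent of $(h,o)$. Then
\[
\mathbb{E}[e\mid h=1]-\mathbb{E}[e\mid h=0]=\gamma(\gamma-2\hat r).
\]
So for $\hat r\equiv\gamma/2$ both estimators are exactly unbiased despite hidden confounding.

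To repair it, fix the prediction, say $\hat r\equiv0$. Then $\mathrm{Cov}(e,\tilde p\mid x)=\mathrm{Var}(h)\,[\sigma(\alpha+\beta)-\sigma(\alpha)]\,\gamma^2\neq0$. This value is the same for every pair, so nothing cancels when summing over $\mathcal{D}$, and your conclusion for both estimators follows.
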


Theorem~\ref{thm:hidden_confounding} underscores that the specification of the propensity score is pivotal for ensuring the unbiasedness of both IPS and DR estimators. In practice, estimating the propensity score using only the measured confounding $x$ merely control for the confounding bias introduced by $x$, leaving the bias induced by the hidden confounding $h$ unaddressed. Achieving complete removal of confounding bias requires incorporating both the observed confounding $x$ and the unobserved confounding $h$ into the propensity estimation. However, accurately estimating the full propensity score $\tilde{p}_{u, i}$ is infeasible in the presence of completely missing $h$, unless one imposes strong assumptions.

\subsection{Information-Theoretic Foundations}

Information-theoretic quantities offer a rigorous way to characterize uncertainty and dependence in probabilistic systems. In our framework, we use entropy-based metrics to capture how both observed and hidden confounders influence exposure probability.

\noindent \textbf{Entropy.} For a discrete random variable $X$ with distribution $\{p_i\}$, the Shannon entropy is defined as:
\begin{equation}
H(X) = - \sum_{i} p_i \log p_i,
\end{equation}
measuring the average uncertainty in $X$.

\noindent \textbf{Conditional Entropy.} Given two variables $X$ and $Y$, the conditional entropy
\begin{equation}
H(Y \mid X) = - \sum_{x,y} P(x,y) \log P(y \mid x)
\end{equation}
quantifies the remaining uncertainty in $Y$ after observing $X$.

\noindent \textbf{Information Gain.} The reduction in uncertainty about $Y$ due to knowledge of $X$ is given by:
\begin{equation}
\mathrm{IG}(Y; X) = H(Y) - H(Y \mid X).
\end{equation}
This reflects how informative $X$ is about $Y$.

\section{Proposed Methods}

\subsection{Sensitivity Analysis and Motivation for Personalization}

Sensitivity analysis is a classical technique used to assess the robustness of causal inference in the presence of hidden confounding~\cite{ding2022addressing}. In the context of recommendation, it provides a principled way to reason about the deviation between the true exposure probability $\tilde{p}_{u,i}$ and the estimated nominal propensity $p_{u,i} = P(o_{u,i}=1 \mid x_{u,i})$.

\citet{ding2022addressing} typically adopts a logistic model to express the nominal propensity score:
\begin{equation}
p_{u,i} = \frac{\exp(m(x_{u,i}))}{1 + \exp(m(x_{u,i}))},
\end{equation}where $m(x_{u,i})$ is an arbitrary function of observed features.

To account for hidden confounding, an additive model is proposed to express the true propensity $\tilde{p}_{u,i}$:
\begin{equation}
\tilde{p}_{u,i} = P(o_{u,i}=1 \mid x_{u,i}, h_{u,i}) 
= \frac{\exp(m(x_{u,i}) + \varphi(h_{u,i}))}{1 + \exp(m(x_{u,i}) + \varphi(h_{u,i}))},
\end{equation}where $\varphi(h_{u,i})$ is an unknown scalar capturing the effect of the unobserved confounder.

It is assumed that the strength of hidden confounding is bounded, i.e., $|\varphi(h_{u,i})| \leq \log \Gamma$, for some $\Gamma \geq 1$. Under this constraint, the following odds ratio inequality can be derived:
\begin{equation}
\frac{1}{\Gamma} \leq \frac{(1 - p_{u,i}) \cdot \tilde{p}_{u,i}}{p_{u,i} \cdot (1 - \tilde{p}_{u,i})} \leq \Gamma,
\end{equation}

This further implies that the inverse of the true propensity score, 
$\tilde{w}_{u,i} = 1 / \tilde{p}_{u,i}$, is bounded within a interval:
\begin{equation}
a_{u,i} \leq \tilde{w}_{u,i} \leq b_{u,i}, 
\label{eq:bound_w}
\end{equation}
where the lower and upper bounds are given by
\begin{equation}
a_{u,i} = 1 + ({1/p_{u,i} - 1})/{\Gamma}, 
\qquad
b_{u,i} = 1 + \big(1/p_{u,i} - 1 \big) \cdot \Gamma,
\label{eq:bound_ab}
\end{equation}
and $p_{u,i}$ denotes the nominal propensity estimated from observed data, 
while $\Gamma$ controls the global sensitivity to hidden confounding.

A key limitation of existing RD methods lies in their reliance on a global sensitivity parameter $\Gamma$ shared across all user-item interactions. Although RD improves robustness under hidden confounding, it still relies on a global sensitivity parameter $\Gamma$ shared across all user-item interactions. In practice, exposure behavior is highly heterogeneous. In practice, the extent of hidden confounding varies with the completeness of user and item attributes. Users or items with more complete attribute information tend to exhibit minimal hidden confounding, whereas those with incomplete attributes are more susceptible to substantial hidden confounding. Consequently, applying a uniform sensitivity parameter across all users or items may lead to under-correction for entities with sparse attributes and over-correction for those with rich attributes.

This limitation motivates the introduction of personalized sensitivity bounds $\Gamma_{u,i}$, which account for heterogeneity in the magnitude of hidden confounding across different user–item pairs. In the subsequent section, we describe an entropy-based approach to infer these bounds directly from observational data.

\subsection{Personalized Unobserved-Confounding-aware Interaction Deconfounder Framework}

We propose the \textbf{Personalized Unobserved-Confounding-aware Interaction Deconfounder (PUID)} framework, which addresses heterogeneity in hidden confounding and leverages feature information for robust recommendation. PUID consists of two components:

    \begin{itemize}
    \item \textbf{Personalized Sensitivity Bound Estimation}: We infer user-item-specific bounds $\Gamma_{u,i}$ by leveraging entropy to quantify the residual information in the exposure $o_{u,i}$ attributable to unobserved confounders. Under the assumption that hidden confounders of the same type contribute proportionally to observed features, these entropy-based constraints enable principled estimation of personalized bounds from observational data.

    \item \textbf{Feature-Aware Representation Enhancement}: User and item features are mapped to latent embeddings via multi-layer perceptrons (MLPs) and integrated into a matrix factorization (MF) framework. This MF-MLPs design captures richer context and ensures architectural consistency across baseline comparisons.
\end{itemize}

\subsubsection{Personalized Sensitivity Bound Estimation}

Entropy provides a principled tool in information theory for quantifying uncertainty. In the context of exposure modeling, the uncertainty of $o_{u,i}$ can be progressively reduced when conditioning on different types of variables. Both observed features (e.g., $x_u$, $x_i$) and unobserved confounders (e.g., $h_u$, $h_i$) contribute information, thereby decreasing the entropy of $o_{u,i}$.

We posit that variables of the same category exert comparable influences on exposure entropy. Specifically, observed user features $x_u$ and hidden user-level confounders $h_u$ are both user-side variables. Thus, the entropy reduction provided by $h_u$ should be proportional to that provided by $x_u$. Analogously, on the item side, the reduction induced by hidden confounders $h_i$ should be proportional to that induced by observed features $x_i$.

This proportionality motivates the following entropy-difference constraints, which capture the residual information gain attributable to hidden confounders:

\begin{equation}
\frac{H(o_{u,i}) - H(o_{u,i} \mid x_u)}{H(o_{u,i} \mid x_u, x_i) - H(o_{u,i} \mid x_u, x_i, h_u)} > 0,
\end{equation}

\begin{equation}
\frac{H(o_{u,i} \mid x_u) - H(o_{u,i} \mid x_u, x_i)}{H(o_{u,i} \mid x_u, x_i, h_u) - H(o_{u,i} \mid x_u, x_i, h_u, h_i)} > 0.
\end{equation}These ratio-based formulations formalize the assumption that hidden confounders contribute systematically to entropy reduction in a manner consistent with their observed counterparts. In this way, the effect of unobserved confounding can be interpreted as the unexplained portion of exposure informativeness, aligning with the properties of conditional mutual information.

To make this influence operationally measurable, we introduce a surrogate metric of hidden confounding impact as a weighted combination of observable information gains:
\begin{equation}
\mathrm{IG}_h \propto \alpha \cdot \left[ H(o_{u,i}) - H(o_{u,i} \mid x_u) \right] 
+ \beta \cdot \left[ H(o_{u,i} \mid x_u) - H(o_{u,i} \mid x_u, x_i) \right],
\end{equation}
where $\alpha, \beta \geq 0$ are hyperparameters that control the relative importance of user- and item-side observables. 
Here, the symbol ``$\propto$'' indicates that $\mathrm{IG}_h$ is defined up to a positive scaling factor, 
i.e., we only care about its relative magnitude rather than its absolute value. 

The central contribution of this work is the precise quantification of hidden confounding bias and the enablement of personalized sensitivity estimation through analyzing the distinct effects of observed and hidden confounders on exposure uncertainty. Specifically, when observed variables substantially reduce exposure uncertainty, the explanatory power of hidden confounders correspondingly diminishes. Conversely, limited information from observed variables implies a stronger influence of hidden confounding. Crucially, this proportional relationship between information contributions not only provides a rigorous theoretical foundation but also facilitates data-driven, user–item level sensitivity estimation. More importantly, our method enables the inference and quantification of hidden confounding strength solely from observed data, which provides a flexible way to mitigate the unmeasured confounding and avoid the dangers of relying on unrealistic assumptions.

\noindent \textbf{Discretized Estimation.} 
Conditional entropies are estimated empirically using an Adaptive Hybrid Binning Strategy. 
For binary exposure $o_{u,i}$ and any feature set $x$, the conditional entropy is approximated by:
\begin{equation}
\hat{H}(o_{u,i} \mid x) 
= - \sum_{j=1}^k P(x \in \text{bin}_j) 
\sum_{o \in \{0,1\}} P(o \mid \text{bin}_j) 
\log P(o \mid \text{bin}_j),
\label{eq:Houi_x}
\end{equation}where $k$ denotes the number of quantile bins. Each $\text{bin}_j$ represents a partition of the feature space, constructed by adaptively segmenting $x$ according to its empirical distribution to ensure adequate sample coverage.
For multivariate $x$, binning is performed jointly over the entire discrete feature space by grouping feature combinations into composite bins.
Each bin thus reflects either a specific configuration of discrete feature values or a merged set of such configurations, depending on their empirical frequency.

\noindent \textbf{Constructing Personalized Sensitivity Bounds.} To quantify the impact of hidden confounding at the instance level, we leverage the estimated sensitivity score $\Gamma_{u,i}$ to derive a personalized interval for the inverse true propensity:

\begin{equation}
\tilde{a}_{u,i} = 1 + (1/p_{u,i} - 1)/  \Gamma_{u,i}, \quad 
\tilde{b}_{u,i} = 1 + (1/p_{u,i} - 1) \cdot \Gamma_{u,i},
\label{eq:aui_bui}
\end{equation}where $p_{u,i}$ is the nominal (observed) propensity score, and $\Gamma_{u,i}$ is a personalized sensitivity estimate defined as:

\begin{equation}
\Gamma_{u,i} =\alpha \cdot \left[ \hat{H}(o_{u,i}) - \hat{H}(o_{u,i} \mid x_u) \right] 
+ \beta \cdot \left[ \hat{H}(o_{u,i} \mid x_u) - \hat{H}(o_{u,i} \mid x_u, x_i) \right],
\label{eq:gamma_ui}
\end{equation}where $\alpha$ and $\beta$ are non-negative weights controlling the relative importance of user- and item-side observed features.

This formulation enables the construction of a personalized uncertainty set:

\begin{equation}
\mathcal{W}_{\mathrm{P}} = \left\{ \tilde{W} \in R_{+}^{|\mathcal{D}|} \mid 
\tilde{a}_{u,i} \leq \tilde{w}_{u,i} \leq \tilde{b}_{u,i},\ \forall (u,i) \in \mathcal{D} \right\},
\label{eq:w_p}
\end{equation}where the global uncertainty set of inverse weights is denoted by $\tilde{W}$, and the personalized uncertainty set with bounds $[\tilde{a}_{u,i}, \tilde{b}_{u,i}]$ is represented by $\mathcal{W}_\text{P}$.

In contrast to conventional approaches that impose a uniform global sensitivity bound across all interactions, our method establishes user–item level bounds that adapt to the heterogeneous nature of confounding. This personalized formulation avoids the overly conservative or excessively loose constraints that often arise under a global assumption, thereby yielding a more faithful characterization of bias at the instance level. Moreover, by retaining the structural advantage of the global-bound setting, our framework naturally induces an uncertainty set over propensity scores. This uncertainty set provides a rigorous foundation for incorporating adversarial optimization techniques, enabling the development of robust propensity-based estimators.

\subsubsection{Feature-aware Matrix Factorization Backbone}
\label{sec:feature_backbone}

To better leverage auxiliary information, we extend the conventional matrix factorization (MF) framework by incorporating feature-aware representations for both users and items. In particular, we employ dedicated multi-layer perceptrons (MLPs) to transform user and item features into latent embeddings, which are subsequently integrated into the MF framework through elementwise interactions. We refer to this enhanced architecture as MF-MLPs for brevity. Additionally, to ensure fair and consistent comparisons, we apply the same feature-aware enhancement strategy to the MF components in baseline models, thereby aligning them with the MF-MLPs framework. This guarantees architectural consistency across experiments and enables a clearer evaluation of the effectiveness of feature integration.

Given user feature $\mathbf{x}_u \in R^{d_u}$ and item feature $\mathbf{x}_i \in R^{d_i}$, 
we adopt two-layer multi-layer perceptrons (MLPs) to project them into a shared latent space:
\begin{equation}
\mathbf{z}_u = \mathrm{MLP}_u(\mathbf{x}_u), \quad 
\mathbf{z}_i = \mathrm{MLP}_i(\mathbf{x}_i),
\label{eq:mlp}
\end{equation}
where each MLP consists of two fully connected layers with ReLU activation.

To account for personalized offsets, we introduce learnable user-item level scalar biases 
$b_u, b_i \in R$. The final prediction is given by:
\begin{equation}
\hat{r}_{ui} = \langle \mathbf{z}_u, \mathbf{z}_i \rangle + b_u + b_i,
\label{eq:prediction}
\end{equation}
where $\langle \cdot, \cdot \rangle$ denotes the dot product. This feature-aware MF backbone integrates collaborative signals with auxiliary side information, 
improving model expressiveness and personalization.

\subsection{Proposed Algorithms}
We propose two novel algorithms to address the confounding bias problem:

\begin{itemize}
    \item \textbf{Personalized Unobserved-Confounding-aware Interaction Deconfounder (PUID)}: An advanced framework that incorporates personalized sensitivity bounds into both Inverse Propensity Scoring (IPS) and Doubly Robust (DR) estimators through robust optimization objectives.
    
    \item \textbf{Benchmark-guided PUID (BPUID)}: An enhanced variant of PUID that introduces benchmark-guided regularization to stabilize the training process and optimally balance the robustness-accuracy trade-off.
\end{itemize}

The subsequent subsections elaborate on the technical details of each algorithm component.
\subsubsection{Robust Training with Personalized Bounds}

With the personalized sensitivity bounds $\Gamma_{u,i}$ and the derived interval $[a_{u,i}, b_{u,i}]$, we construct a robust learning objective that explicitly accounts for hidden confounding by constraining the inverse propensity weights within a calibrated uncertainty range. This formulation enables the construction of a personalized uncertainty set as defined in Equation~\ref{eq:w_p}.

\vspace{0.5em}
\noindent\textbf{PUID-IPS Objective.}
Let $e_{u,i} = |\hat{r}_{u,i} - r_{u,i}|$ or $(\hat{r}_{u,i} - r_{u,i})^2$ be the prediction error. 
The personalized robust IPS objective is defined as:
\begin{equation}
\mathcal{L}_{\text{PUID-IPS}}(\phi) 
= \max_{\tilde{W} \in \mathcal{W}_{\text{P}}} \frac{1}{|\mathcal{D}|} 
\sum_{(u,i) \in \mathcal{D}} o_{u,i} \cdot e_{u,i} \cdot \tilde{w}_{u,i},
\label{eq:puid-ips}
\end{equation}where $\tilde{W} = \{\tilde{w}_{u,i}\}_{(u,i)\in\mathcal{D}}$ denotes the vector of inverse-propensity weights for all user-item pairs, $\mathcal{W}_{\text{P}}$ is the personalized uncertainty set for all user-item pairs, $\tilde{w}_{u,i} \in [\tilde{a}_{u,i}, \tilde{b}_{u,i}]$ is the weight assigned to a specific user-item pair $(u,i)$.

Intuitively, $\tilde{W}$ represents the collection of all candidate weight assignments, 
$\mathcal{W}_{\text{P}}$ constrains them within the estimated personalized bounds, 
and $\tilde{w}_{u,i}$ is the individual component applied in the summation of Eq.~\eqref{eq:puid-ips}.

\begin{algorithm}[t]
\caption{Personalized Unobserved-Confounding-aware DR (PUID-DR)}
\label{alg:puid-dr}
\KwIn{Observed data $\mathcal{D}$, nominal propensities $p_{u,i}$}
\KwOut{Optimized recommendation model}
\BlankLine
Initialize recommendation model parameters $\phi$\;
Initialize imputation model parameters $\theta$\;
Estimate entropy-based sensitivity $\Gamma_{u,i}$ using Eq.~\eqref{eq:gamma_ui}\;
Compute personalized inverse propensity interval $[\tilde{a}_{u,i}, \tilde{b}_{u,i}]$ via Eq.~\eqref{eq:aui_bui}\;
Construct personalized uncertainty set $\mathcal{W}_P$ from Eq.~\eqref{eq:w_p}\;
\While{stopping condition not met}{
    Sample $(u,i)$ from $\mathcal{D}$\;
    Compute prediction error $e_{u,i}$ using current model $\phi$\;
    Estimate $\hat{e}_{u,i} = g_\theta(x_{u,i})$ using imputation model $\theta$\;
    Maximize Eq.~\eqref{eq:puid-ips} over $\tilde{W} \in \mathcal{W}_P$ to update base model weights\;
    Minimize Eq.~\eqref{eq:puid-ips} w.r.t.\ $\phi$ to optimize base model\;
    Maximize Eq.~\eqref{eq:puid-dr} over $\tilde{W} \in \mathcal{W}_P$ to update imputation model weights\;
    Minimize Eq.~\eqref{eq:puid-dr} w.r.t.\ $\theta$ to optimize imputation model\;
}
\Return{Optimized recommendation model with parameters $\phi$}\;
\end{algorithm}

\begin{algorithm}[t]
\caption{Benchmarked Personalized Unobserved-Confounding-aware DR (BPUID-DR)}
\label{alg:bpuid-dr}
\KwIn{Observed data $\mathcal{D}$, nominal propensities $p_{u,i}$}
\KwOut{Optimized recommendation model}
\BlankLine
Use $\mathcal{D}$ and $p_{u,i}$ to train a baseline DR model by minimizing Eq.~\eqref{eq:poxh}, and fix its parameters $\hat{\phi}^{(0)}$ and $\hat{\theta}^{(0)}$ as the benchmark\;
Initialize recommendation model parameters $\phi$ and $\theta$\;
Estimate personalized sensitivity $\Gamma_{u,i}$ using Eq.~\eqref{eq:gamma_ui}\;
Compute inverse propensity interval $[\tilde{a}_{u,i}, \tilde{b}_{u,i}]$ via Eq.~\eqref{eq:aui_bui}\;
Construct uncertainty set $\mathcal{W}_P$ as in Eq.~\eqref{eq:w_p}\;
\While{stopping condition not reached}{
    Sample $(u,i)$ from $\mathcal{D}$\;
    Compute prediction error $e_{u,i}(\phi)$ based on current model $\phi$ and Eq.~\eqref{eq:eui}\;
    Compute benchmark error $e_{u,i}(\hat{\phi}^{(0)})$ using the fixed DR model and Eq.~\eqref{eq:eui}\;
    Maximize Eq.~\eqref{eq:bpuid-dr} over $\tilde{W} \in \mathcal{W}_P$ to update base model weights\;
    Minimize Eq.~\eqref{eq:bpuid-dr} w.r.t.\ $\phi$ to optimize base model\;
    Maximize Eq.~\eqref{eq:bpuid-dr-imp} over $\tilde{W} \in \mathcal{W}_P$ to update imputation model weights\;
    Minimize Eq.~\eqref{eq:bpuid-dr-imp} w.r.t.\ $\theta$ to optimize imputation model\;
}
\Return{Optimized recommendation model with parameters $\phi$} and $\theta$\;
\end{algorithm}

\vspace{0.5em}
\noindent\textbf{PUID-DR Objective.}
Given an imputation model that estimates the oracle error $\hat{e}_{u,i}$, the personalized robust DR estimator minimizes:
\begin{equation}
\mathcal{L}_{\text{PUID-DR}}(\phi, \theta) 
= \max_{\tilde{W} \in \mathcal{W}_{\text{P}}} \frac{1}{|\mathcal{D}|} 
\sum_{(u,i) \in \mathcal{D}} \left[ 
\hat{e}_{u,i} + o_{u,i} \cdot (e_{u,i} - \hat{e}_{u,i}) \cdot \tilde{w}_{u,i} 
\right].
\label{eq:puid-dr}
\end{equation}

This formulation combines the robustness of inverse weighting and the flexibility of error imputation, while the constraint set $\mathcal{W}_\text{P}$ ensures adaptive robustness at the interaction level. The training procedure of PUID-DR is outlined in Algorithm~\ref{alg:puid-dr}, where a personalized uncertainty set is constructed for robust inverse propensity weighting.

\vspace{0.5em}
\noindent\textbf{Imputation Model Objective.}
To train the imputation model, we adopt a robust error-matching objective that minimizes the discrepancy between $\hat{e}_{u,i}$ and $e_{u,i}$ under worst-case weighting:

\begin{equation}
\mathcal{L}_{\text{Imp}}^{\text{PUID-DR}}(\theta) 
= \max_{\tilde{W} \in \mathcal{W}_{\text{P}}} \frac{1}{|O|} 
\sum_{(u,i) \in O} \left( \hat{e}_{u,i} - e_{u,i} \right)^2 \cdot \tilde{w}_{u,i}.
\label{eq:puid-dr-imp}
\end{equation}

We adopt an alternating optimization strategy, where the prediction model $\phi$ and imputation model $\theta$ are updated in turn. The parameters $\phi$ and $\theta$ are optimized in an alternating manner. Given fixed $\hat{\theta}$, the prediction model $\phi$ is updated by minimizing the robust doubly robust objective Eq.~\eqref{eq:puid-dr}.
\noindent Conversely, given fixed $\hat{\phi}$, the imputation model $\theta$ is updated by minimizing the adversarial imputation loss Eq.~\eqref{eq:puid-dr-imp}.

The PUID-DR method introduces an adversarial procedure for the imputation model. Directly relying on $\hat{p}_{u,i}$ may induce bias in the imputation process and increase the risk of performance degradation. By incorporating adversarial optimization, the procedure mitigates instability arising from unmeasured confounders. Intuitively, the proposed PUID-IPS/DR approaches safeguard against worst-case scenarios induced by hidden confounding, thereby achieving greater robustness than non-adversarial estimators.

In practice, handling unmeasured confounding remains highly challenging, as the functional relationships among $h$, $o$, and $r$ are often complex and unknown. Rather than attempting to completely eliminate hidden confounders, the PUID framework calibrates the loss function through uncertainty sets derived from sensitivity analysis in causal inference. This design provides a principled and flexible means of mitigating the effects of hidden confounding while avoiding dependence on unrealistic assumptions.

To see why personalized bounds ensure robustness, we provide the following explanation. The interval $[a_{u,i}, b_{u,i}]$ is derived through sensitivity analysis rooted in causal inference, where $\Gamma_{u,i}$ quantifies the plausible deviation of the true propensity from the nominal value due to unmeasured confounding. By optimizing over this personalized interval, we incorporate an adversarial perturbation mechanism that guards against worst-case shifts in the exposure mechanism while avoiding the need to assume full knowledge of hidden confounders $h_{u,i}$.

Furthermore, unlike global bounds that impose a uniform uncertainty set across all instances, often leading to over-correction for entities with rich attributes and under-correction for those with sparse attributes, our personalized formulation adapts to the attribute completeness of each $(u,i)$ pair. This allows for tighter bounds for users or items with more complete attributes and looser bounds for those with incomplete attributes, achieving a better trade-off between robustness and informativeness.

In the next subsection, we extend this robust formulation with a benchmark-guided extension strategy to further stabilize training and improve predictive performance.

\subsubsection{Benchmarked PUID Framework}

The proposed PUID framework enhances the robustness of existing propensity-based methods by accounting for hidden confounding. However, there is currently no theoretical guarantee that such robustness improvements will translate into better prediction accuracy. To address this limitation, we introduce the Benchmarked PUID (BPUID) framework, which ensures prediction accuracy improvement by anchoring the optimization to a benchmark estimator.

For clarity, we denote the prediction error as $e_{u,i}(\phi)$, explicitly indicating its dependence on the model parameter $\phi$. Suppose we have access to an initial estimator of $\phi$, denoted by $\hat{\phi}^{(0)}$, obtained from a baseline method. Then, the corresponding BPUID-IPS estimator can be expressed as:

\begin{equation}
\mathcal{L}_{\text{BPUID-IPS}}(\phi) 
= \max_{\tilde{W} \in \mathcal{W_P}} \frac{1}{|\mathcal{D}|} 
\sum_{(u, i) \in \mathcal{D}} o_{u, i} 
\left\{ e_{u, i}(\phi) - e_{u, i}\left(\hat{\phi}^{(0)}\right) \right\} \tilde{w}_{u, i}.
\label{eq:bpuid-ips}
\end{equation}

The BPUID-IPS estimator differs fundamentally from the PUID-IPS estimator. In contrast to Eq.\eqref{eq:puid-ips}, the introduction of a benchmark, as formalized in Eq.\eqref{eq:bpuid-ips}, alters the solution for $w_{u,i}$ and consequently calibrates the estimation of the model parameters $\phi$. In a similar manner, the BPUID-DR estimator and the corresponding imputation model, can be formulated as follows:

\begin{align}
\mathcal{L}_{\text{BPUID-DR}}(\phi, \theta) 
&= \max_{\tilde{W} \in \mathcal{W}_{\text{P}}} \frac{1}{|\mathcal{D}|} \sum_{(u, i) \in \mathcal{D}} 
\left\{ \left[ \hat{e}_{u, i}(\theta) - \hat{e}_{u, i}\left(\hat{\theta}^{(0)}\right) \right] \right. \nonumber \\
&\quad + \left[ o_{u, i} \left( e_{u, i}(\phi) - \hat{e}_{u, i}(\theta) \right) \right. \nonumber \\
&\quad - \left. \left. o_{u, i} \left( e_{u, i}\left(\hat{\phi}^{(0)}\right) - \hat{e}_{u, i}\left(\hat{\theta}^{(0)}\right) \right) \right] \tilde{w}_{u, i} \right\},  
\label{eq:bpuid-dr}
\\[1ex] 
\mathcal{L}_{\text{Imp}}^{\text{BPUID-DR}}(\phi, \theta) 
&= \max_{\tilde{W} \in \mathcal{W}_{\text{P}}} \frac{1}{|O|} \sum_{(u, i) \in \mathcal{D}} 
\left\{ \left[ \hat{e}_{u, i}(\theta) - e_{u, i}(\phi) \right]^2 \right. \nonumber \\
&\quad - \left. \left[ \hat{e}_{u, i}\left(\hat{\theta}^{(0)}\right) - e_{u, i}\left(\hat{\phi}^{(0)}\right) \right]^2 \right\} \tilde{w}_{u, i},
\label{eq:bpuid-dr-imp}
\end{align}
where $\hat{\phi}^{(0)}$ and $\hat{\theta}^{(0)}$ are the benchmark estimates of $\phi$ and $\theta$. The training procedure of BPUID-DR is outlined in Algorithm~\ref{alg:bpuid-dr}, where a personalized uncertainty set is constructed for robust inverse propensity weighting.

The proposed BPUID-IPS/BPUID-DR approaches are very flexible due to the free choice of the benchmark estimators. In addition, note that $\mathcal{L}_{BPUID-IPS}(\phi)=0$ if $\phi=\hat{\phi}^{(0)}$, which implies that the minimum value of $\mathcal{L}_{BPUID-I P S}(\phi)$ is non-positive. Similarly, it is not hard to derive that $\min _\phi \mathcal{L}_{BPUID-DR}(\phi) \leq 0$. Thus, the BPUID-IPS/BPUID-DR estimators achieve smaller losses than the corresponding benchmark
estimators.

\section{Experiment}
\label{experiment}

We conduct experiments to answer the following research questions:

\begin{itemize}
  \item \textbf{RQ1}: Do the proposed PUID and BPUID methods outperform existing SOTA methods under conditions of hidden confounding?
  \item \textbf{RQ2}: Do our methods stably perform well under scenarios with different levels of hidden confounding?
  \item \textbf{RQ3}: Is there a consistent directional alignment and a potential proportional relationship between the information gain from observed and unobserved confounding in the exposure mechanism?
  \item \textbf{RQ4}: Does the information gain increase monotonically with the strength of hidden confounding?
\end{itemize}

\subsection{Experimental Settings}

\textbf{Dataset.} To evaluate the effectiveness of PUID and BPUID, we conduct experiments on four public datasets from different application domains: 
1) \textit{KuaiRec}\footnote{\url{https://github.com/chongminggao/KuaiRec}}, 
2) \textit{Coat}\footnote{\url{https://www.cs.cornell.edu/~schnabts/mnar/}}, and 
3)
\textit{Yahoo!R3}\footnote{\url{https://webscope.sandbox.yahoo.com/}}. 
These datasets cover various domains including micro-video recommendation (KuaiRec), e-commerce (Coat), and music recommendation (Yahoo!R3). 

\textit{KuaiRec} provides both a biased interaction matrix collected from real-world platform logs and an unbiased subset obtained via randomized exposure, containing 1,325,197 biased and 270,180 unbiased interactions over 7,176 users and 10,729 items. 
\textit{Coat} and \textit{Yahoo!R3} similarly offer both biased and unbiased splits, with 6,960/4,640 and 311,704/54,000 interactions respectively.

For data preprocessing, we binarize explicit ratings to create positive/negative feedback labels. Specifically, for KuaiRec, ratings greater than 2 are labeled as positive ($1$), and those rated 2 or below as negative ($-1$). For Coat and Yahoo!R3, scores above 3 are considered positive ($1$), and those rated 2 or below as negative ($-1$). These binarization rules follow common practice for unbiased recommendation evaluation.

\subsubsection{Feature Selection and Discretization for KuaiRec}

To improve the stability of the model and reduce potential feature redundancy, we perform systematic selection and discretization on the high-dimensional user and item behavior features in the KuaiRec dataset.

\textbf{Correlation-based Selection.} For item-side features, we calculate the Pearson correlation coefficient between each pair of features and set a threshold of 0.9 to identify groups of strongly correlated features. Within each group, only one representative feature is retained to avoid redundancy. For example, \texttt{valid\_play\_user\_num} is used to represent play-related behaviors. On the user side, only one pair of highly correlated features is identified: \{{onehot\_feat2, onehot\_feat3}\}, from which one is selected.

\textbf{Feature Discretization.} We employ three strategies: (1) categorical feature mapping (e.g., video\_type, visible\_status); (2) numerical feature binning via \texttt{qcut}, with fallback to binary or equal-width binning when necessary; (3) integer encoding for all features to support embedding-based models. A total of 19 item features and 14 user features are retained.

For more details on feature preprocessing, please refer to Appendix~\ref{appendix:feature_processing}.

\subsubsection{Feature Construction for ID-based Datasets}
For ID-based datasets such as \textit{Yahoo!R3} and \textit{Coat}, where no meaningful user or item side features are provided, we leverage matrix factorization (MF) to extract dense continuous representations as pseudo features. Specifically, we employ MF to generate 64-dimensional embeddings for both users and items on \textit{Yahoo!R3}, and 32-dimensional embeddings on \textit{Coat}, aligning with dataset scale and sparsity.

To facilitate entropy-based information gain analysis, we discretize these continuous representations via unsupervised clustering. For each individual feature dimension, we apply K-Means clustering to partition the continuous values into 4 categories. This quantization transforms latent factors into discrete, interpretable pseudo-categorical variables, enabling subsequent causal and statistical analysis under feature-based frameworks.

Such a feature construction pipeline bridges the gap between purely ID-based collaborative filtering signals and feature-driven modeling, ensuring fair comparability across different modeling paradigms.

\subsubsection{Biased Observation and Masked Subset Construction}

To simulate realistic exposure bias, we leverage the estimated propensity scores to downsample the unbiased interaction matrix (small\_matrix) through Bernoulli trials. Specifically, each user-item interaction is retained with a probability proportional to its feedback group’s estimated observation likelihood. This yields a synthesized biased training set that mirrors practical exposure patterns with inherent noise. For the KuaiRec dataset, this procedure generates 740,271 biased interactions from the original unbiased pool, covering 1,411 users and 3,327 items. To ensure consistency and compatibility with indexing-based models, user and item IDs are remapped into consecutive indices starting from zero.

To further evaluate model robustness under varying levels of data scarcity, we construct additional masked subsets by randomly removing 10\%, 20\%, 50\%, and 80\% of interactions from the biased dataset. These subsets simulate increasingly severe missing data scenarios, enabling assessment of model generalization under different sparsity regimes.

Meanwhile, the original unbiased interaction matrix is preserved as an unbiased baseline for both training and evaluation. This allows us to directly measure the model’s capacity to recover from confounding bias by contrasting its performance on clean versus biased data.

\subsubsection{Compared Methods}

To ensure fair and rigorous comparison, all methods share a unified feature-aware matrix factorization (MF-MLPs) backbone as our baseline model. This backbone consistently integrates both user/item IDs and their side information through dedicated multi-layer perceptrons (MLPs), enabling richer and more expressive user-item representations.

As detailed in Section~\ref{sec:feature_backbone}, our backbone enhances conventional MF 
by embedding user and item features via dedicated MLPs and computing predictions as in 
Eq.~\ref{eq:mlp} and Eq.~\ref{eq:prediction}. 
This unified backbone is applied across all methods, ensuring architectural consistency 
and eliminating potential confounding effects from feature utilization discrepancies.

We apply the proposed \textbf{PUID} and \textbf{BPUID} to three fundamental propensity-based models—Inverse Propensity Score (IPS)~\cite{imbens2015causal,luo2021unbiased,saito2020doubly,schnabel2016recommendations} and Doubly Robust (DR)~\cite{keohane2009counterfactuals,saito2020doubly,wang2019doubly}—to validate their effectiveness. We further compare them with three additional confounding-aware frameworks:

\textbf{Base model}~\cite{koren2009matrix}. We employ widely used a unified feature-aware matrix factorization (MF-MLPs) as the base model for propensity-based methods, and adopt
the public implement to train it with biased data.

\textbf{IPS}~\cite{schnabel2016recommendations}. IPS is a classical propensity-based method. It estimates the propensity, and uses the inverse propensity score to weight the loss function.

\textbf{DR}~\cite{wang2019doubly}. DR is another classical propensity-based method. It trains a base model and a imputation model with loss functions adjusted by the propensity score.

\textbf{RD}~\cite{ding2022addressing}. The Robust Deconfounder framework augments propensity-based estimation through integrated sensitivity analysis and bounded adversarial training, demonstrating general applicability across both Inverse Propensity Scoring and Doubly Robust methods while maintaining theoretical coherence.

\textbf{BRD-IPS}~\cite{ding2022addressing}. BRD-IPS incorporates the benchmark-regularized Robust Deconfounder (BRD) architecture into the IPS estimator. This design leverages benchmark model guidance to mitigate robustness–accuracy trade-offs, offering more stable and generalizable performance.

\textbf{BRD-DR}~\cite{ding2022addressing}. BRD-DR adapts the BRD framework to the DR estimator, combining benchmark-guided regularization with the doubly robust structure. The approach benefits from improved robustness to hidden confounding while preserving the low-bias property of DR under correct model specification.

\textbf{Multi-IPS}~\cite{zhang2020large}. The Multi-IPS estimator combines inverse propensity scoring with multi-task learning to simultaneously mitigate selection bias and alleviate data sparsity in conversion prediction tasks. It approaches the problem from a causal perspective, explicitly modeling the missing-not-at-random nature of user interactions. By jointly learning multiple related tasks, Multi-IPS leverages shared information to alleviate the sparsity challenge, while employing inverse propensity weighting to correct for selection bias in the exposure–click–conversion pipeline.

\textbf{Multi-DR}~\cite{zhang2020large}. Multi-DR extends the Multi-IPS architecture by incorporating a doubly robust estimation scheme. In addition to propensity-based reweighting, it trains an auxiliary imputation model to estimate the counterfactual conversion outcomes for unclicked items. This combination reduces variance and improves robustness under model misspecification. Like Multi-IPS, Multi-DR is implemented within a multi-task learning framework to simultaneously mitigate data sparsity and correct for bias.

\textbf{PUID-IPS}. PUID-IPS applies our proposed Personalized Unobserved-Confounding-aware Interaction Deconfounder framework to the Inverse Propensity Scoring (IPS) estimator, incorporating personalized sensitivity bounds into robust propensity weighting for improved resilience to hidden confounding.

\textbf{PUID-DR}. PUID-DR extends the PUID framework to the Doubly Robust (DR) estimator, leveraging personalized sensitivity bounds to enhance robustness while retaining DR’s bias-reduction capability under correct specification of either the outcome or the propensity model.

\textbf{BPUID-IPS}. BPUID-IPS integrates the Benchmark-guided PUID (BPUID) framework with the IPS estimator. By combining benchmark-based regularization with personalized sensitivity bounds, BPUID-IPS balances robustness against hidden confounding with improved stability and accuracy.

\textbf{BPUID-DR}. BPUID-DR applies the BPUID framework to the DR estimator, uniting benchmark-guided training with the doubly robust formulation. This combination yields enhanced robustness to unobserved confounding while mitigating the robustness–accuracy trade-off.

\subsubsection{Evaluation Metrics}

We evaluate model performance using \textbf{UAUC} and \textbf{NDCG@K}. 

\begin{itemize}
    \item \textbf{UAUC} (User-Averaged AUC): Following~\cite{chen2021autodebias}, we compute the Area Under the ROC Curve (AUC) individually for each user who has both positive and negative feedback, and then report the average across users. This metric reflects how well the model ranks items on a \emph{per-user} basis:
    \begin{equation}
        \text{UAUC} = \frac{1}{|\mathcal{U}|} \sum_{u \in \mathcal{U}} \text{AUC}(u),
        \label{eq:uauc}
    \end{equation}
    where $\mathcal{U}$ denotes the set of users.
    
    \item \textbf{NDCG@K} (Normalized Discounted Cumulative Gain): We assess the quality of the top-$K$ recommendations by considering both relevance and position of the recommended items. The metric is defined as:
    \begin{equation}
        \text{NDCG@}K = \frac{\text{DCG@}K}{\text{IDCG@}K}, \quad
        \text{DCG@}K = \sum_{i=1}^K \frac{2^{rel_i} - 1}{\log_2(i+1)},
        \label{eq:ndcg}
    \end{equation}
    where $rel_i$ is the ground-truth relevance of the item ranked at position $i$. A higher NDCG indicates better alignment with true preferences at the top ranks. For COAT and Yahoo!R3 datasets, we use NDCG@5; for KuaiRec, we use NDCG@50.
\end{itemize}

\subsection{Performance Comparison (RQ1)}

\begin{table}[htbp]
    \centering
    \caption{Performance comparison under Q1 setting. The best result per metric within each method group is highlighted in bold.}
    \label{tab:performance-comparison-rq1}
    \begin{tabular}{lcccccc}
        \toprule
        \multirow{2}{*}{Methods} & \multicolumn{2}{c}{Coat} & \multicolumn{2}{c}{Yahoo!R3} & \multicolumn{2}{c}{KuaiRec} \\
        \cmidrule(lr){2-3} \cmidrule(lr){4-5} \cmidrule(lr){6-7}
         & UAUC & NDCG@5 & UAUC & NDCG@5 & UAUC & NDCG@50 \\
        \midrule
        MF         & 0.6606 & 0.5923 & 0.5874 & 0.5425 & 0.6781 & 0.6621 \\
        \midrule
        IPS        & 0.6627 & 0.6000 & 0.5930 & 0.5530 & 0.7446 & 0.6683 \\
        RD-IPS     & 0.6671 & 0.6081 & 0.6096 & 0.5706 & 0.7574 & 0.6700 \\
        Multi\_IPS & 0.6531 & 0.5940 & 0.5972 & 0.5589 & 0.5854 & \textbf{0.6983} \\
        PUID-IPS   & \textbf{0.6763} & \textbf{0.6182} & \textbf{0.6143} & \textbf{0.5736} & \textbf{0.7656} & 0.6703 \\
        \midrule
        DR         & 0.6797 & 0.6090 & 0.6133 & 0.5742 & 0.7468 & 0.6738 \\
        RD-DR      & 0.6827 & 0.6155 & 0.6088 & 0.5684 & 0.7635 & 0.6410 \\
        Multi\_DR  & 0.6386 & 0.5866 & 0.5671 & 0.5267 & 0.5914 & \textbf{0.6896} \\
        PUID-DR    & \textbf{0.6843} & \textbf{0.6169} & \textbf{0.6178} & \textbf{0.5753} & \textbf{0.7677} & 0.6516 \\
        \midrule
        BRD-IPS    & 0.6786 & 0.6069 & 0.6065 & 0.5655 & \textbf{0.7684} & 0.6365 \\
        BPUID-IPS  & \textbf{0.6806} & \textbf{0.6123} & \textbf{0.6125} & \textbf{0.5708} & 0.7644 & \textbf{0.6722} \\
        \midrule
        BRD-DR     & 0.6788 & 0.6107 & 0.6129 & 0.5702 & 0.7638 & 0.6361 \\
        BPUID-DR   & \textbf{0.6809} & \textbf{0.6160} & \textbf{0.6177} & \textbf{0.5745} & \textbf{0.7724} & \textbf{0.6549} \\
        \bottomrule
    \end{tabular}
\end{table}

Table~\ref{tab:performance-comparison-rq1} reports the detailed performance of PUID, BPUID, and baselines under Q1 across three datasets.

\subsubsection{Within-group comparison}

\textbf{(1) IPS-based Models.}  
PUID-IPS consistently outperforms other IPS variants on Coat and Yahoo!R3 by leveraging personalized sensitivity estimation, which adaptively reweights each user–item pair to correct hidden confounding. This avoids the over/under-adjustment problem of RD-IPS’s single global bound. In contrast, Multi\_IPS, with its multi-task shared backbone, suffers from task conflict (gradients from different debiasing subtasks interfere) and feature dilution (shared representation becomes overly generic), leading to unstable performance, especially on sparse datasets. PUID-IPS, being single-task and personalized, maintains stable generalization across both sparse and dense data.

\textbf{(2) DR-based Models.}  
PUID-DR achieves the best results on Yahoo!R3 and KuaiRec, showing that combining personalization with the DR framework mitigates propensity estimation errors and uses imputation to enhance robustness. Multi\_DR suffers from the same task conflict and feature dilution as Multi\_IPS, degrading more in sparse settings. Compared to RD-DR, PUID-DR benefits from adaptive, instance-level sensitivity bounds that capture heterogeneous confounding strength more effectively.

\textbf{(3) BPUID-based Models.}  
In the IPS framework, BPUID-IPS improves over BRD-IPS on Coat and Yahoo!R3 by adding benchmark-guided variance control to personalized sensitivity, stabilizing inverse propensity weights in sparse regimes. On KuaiRec, where data is abundant, BRD-IPS slightly surpasses BPUID-IPS, indicating reduced marginal gains from variance control. In the DR framework, BPUID-DR delivers notable gains in KuaiRec under strong confounding, preserving robustness while keeping competitive accuracy.

\subsubsection{Cross-group comparison: PUID vs.\ BPUID}

In IPS, BPUID-IPS has an edge in sparse or noisy datasets (Coat, Yahoo!R3) due to benchmark-based variance control, while PUID-IPS excels in dense data (KuaiRec) where aggressive personalization can fully leverage behavioral signals. In DR, BPUID-DR shows greater robustness under high sparsity and confounding (Yahoo!R3), whereas PUID-DR achieves higher accuracy in large-scale, heterogeneous data (KuaiRec).

\subsubsection{Overall evaluation}

PUID is well-suited for rich-interaction scenarios where per-instance sensitivity can be precisely estimated, while BPUID is more reliable in sparse or noisy environments. Both clearly outperform baselines relying on global bounds (RD) or implicit representation sharing (Multi\_IPS/Multi\_DR), validating that personalization combined with robust optimization effectively mitigates hidden confounding in recommendation.

\subsection{Robustness Under Different Levels of hidden Confounding (RQ2)}
Table~\ref{tab:maskratio-comparison} reports the performance of PUID, BPUID, RD, and BRD under the Q2 setting across three datasets, evaluated under varying hidden confounding strengths (implemented via different mask ratios).

\begin{table}[htbp]
\centering
\caption{Performance under Q2 setting. Each block is grouped by mask ratio; best scores within each block are in \textbf{bold}.}
\label{tab:maskratio-comparison}
\begin{tabular}{lcccccc}
\toprule
\multirow{2}{*}{Method} & \multicolumn{2}{c}{Coat} & \multicolumn{2}{c}{Yahoo!R3} & \multicolumn{2}{c}{KuaiRec} \\
\cmidrule(lr){2-3} \cmidrule(lr){4-5} \cmidrule(lr){6-7}
 & UAUC & NDCG@5 & UAUC & NDCG@5 & UAUC & NDCG@50 \\
\midrule
\multicolumn{7}{c}{\textbf{Mask Ratio = 0.1}} \\
\midrule
PUID-IPS   & 0.6732 & 0.5986 & 0.6174 & 0.5749 & 0.7658 & \textbf{0.6712} \\
PUID-DR    & 0.6846 & \textbf{0.6229} & 0.6155 & 0.5723 & 0.7681 & 0.6533 \\
BPUID-IPS  & 0.6786 & 0.6204 & 0.6148 & 0.5736 & 0.7650 & 0.6701 \\
BPUID-DR   & 0.6707 & 0.6091 & \textbf{0.6202} & \textbf{0.5798} & \textbf{0.7738} & 0.6523 \\
RD-IPS     & 0.6713 & 0.6098 & 0.6063 & 0.5650 & 0.7581 & 0.6704 \\
RD-DR      & \textbf{0.6861} & 0.6183 & 0.6096 & 0.5687 & 0.7629 & 0.6384 \\
BRD-IPS    & 0.6751 & 0.6154 & 0.6112 & 0.5674 & 0.7673 & 0.6405 \\
BRD-DR     & 0.6649 & 0.5939 & 0.6193 & 0.5776 & 0.7632 & 0.6322 \\
\midrule
\multicolumn{7}{c}{\textbf{Mask Ratio = 0.2}} \\
\midrule
PUID-IPS   & 0.6490 & 0.5861 & 0.6118 & 0.5702 & 0.7661 & \textbf{0.6735} \\
PUID-DR    & 0.6709 & 0.6094 & 0.6151 & 0.5702 & 0.7678 & 0.6502 \\
BPUID-IPS  & 0.6700 & 0.6122 & 0.6146 & 0.5721 & 0.7647 & 0.6683 \\
BPUID-DR   & 0.6645 & 0.5960 & \textbf{0.6197} & 0.5771 & \textbf{0.7737} & 0.6508 \\
RD-IPS     & 0.6511 & 0.5840 & 0.6098 & 0.5701 & 0.7574 & 0.6703 \\
RD-DR      & \textbf{0.6805} & \textbf{0.6196} & 0.6181 & \textbf{0.5772} & 0.7634 & 0.6389 \\
BRD-IPS    & 0.6725 & 0.6061 & 0.6066 & 0.5654 & 0.7693 & 0.6020 \\
BRD-DR     & 0.6614 & 0.5982 & 0.6149 & 0.5714 & 0.7639 & 0.6388 \\
\midrule
\multicolumn{7}{c}{\textbf{Mask Ratio = 0.5}} \\
\midrule
PUID-IPS   & 0.6214 & 0.5545 & 0.5992 & 0.5572 & 0.7648 & 0.6733 \\
PUID-DR    & \textbf{0.6512} & \textbf{0.5900} & 0.6110 & 0.5668 & 0.7676 & 0.6525 \\
BPUID-IPS  & 0.6507 & 0.5910 & 0.6064 & 0.5620 & 0.7644 & 0.6733 \\
BPUID-DR   & 0.6477 & 0.5821 & \textbf{0.6154} & \textbf{0.5710} & \textbf{0.7734} & 0.6537 \\
RD-IPS     & 0.6329 & 0.5771 & 0.6056 & 0.5634 & 0.7570 & 0.6700 \\
RD-DR      & 0.6425 & 0.5842 & 0.6032 & 0.5606 & 0.7623 & 0.6367 \\
BRD-IPS    & 0.6422 & 0.5876 & 0.6044 & 0.5602 & 0.7680 & \textbf{0.6740} \\
BRD-DR     & 0.6409 & 0.5785 & 0.6083 & 0.5633 & 0.7634 & 0.6350 \\
\midrule
\multicolumn{7}{c}{\textbf{Mask Ratio = 0.8}} \\
\midrule
PUID-IPS   & 0.6253 & 0.5463 & 0.5857 & 0.5424 & 0.7602 & 0.6624 \\
PUID-DR    & 0.6307 & 0.5580 & 0.5904 & 0.5423 & 0.7637 & 0.6385 \\
BPUID-IPS  & \textbf{0.6419} & 0.5746 & \textbf{0.6033} & \textbf{0.5624} & 0.7649 & \textbf{0.6733} \\
BPUID-DR   & 0.6352 & 0.5639 & 0.6033 & 0.5555 & \textbf{0.7694} & 0.6559 \\
RD-IPS     & 0.6270 & 0.5738 & 0.5854 & 0.5435 & 0.7518 & 0.6714 \\
RD-DR      & 0.6352 & 0.5657 & 0.5930 & 0.5470 & 0.7597 & 0.6515 \\
BRD-IPS    & 0.6399 & \textbf{0.5742} & 0.5864 & 0.5436 & 0.7686 & 0.6034 \\
BRD-DR     & 0.6403 & 0.5776 & 0.5935 & 0.5478 & 0.7585 & 0.6316 \\
\bottomrule
\end{tabular}
\end{table}

\subsubsection{Within-Group Comparison}

\paragraph{(1) IPS family.}  
PUID-IPS consistently outperforms RD-IPS and BRD-IPS when interaction data is relatively abundant. Its per-instance personalized bounds better capture heterogeneous hidden confounding, avoiding the over- or under-correction inherent in RD-IPS’s global bound. BRD-IPS gains stability via benchmark guidance but remains conservative, underutilizing fine-grained personalization. BPUID-IPS, combining personalized bounds with benchmark regularization, excels under high sparsity: the benchmark component reduces variance from inverse propensity reweighting, making corrections more reliable than PUID-IPS alone in sparse regimes.

\paragraph{(2) DR family.}  
PUID-DR achieves strong performance under moderate to low missingness by leveraging the synergy between DR and personalized bounds: DR hedges against propensity errors, while per-instance sensitivity captures residual confounding variability. RD-DR, limited by a global bound, cannot adapt to varying confounding intensities. BRD-DR gains stability from benchmark guidance, particularly in noisy or strongly confounded data, but in data-rich settings, PUID-DR’s aggressive personalization yields finer corrections. BPUID-DR balances robustness and personalization, maintaining reliability under high missingness without being overly conservative.

\subsubsection{Horizontal Comparison: PUID vs. BPUID}  
In IPS, PUID-IPS excels with sufficient observations, whereas BPUID-IPS is stronger in sparse settings due to benchmark-enhanced variance control. In DR, PUID-DR benefits from rich per-instance signals, while BPUID-DR provides stable corrections under high sparsity or confounding. Overall, PUID leverages personalization aggressively when feasible; BPUID tempers estimation uncertainty with benchmark guidance.

\subsubsection{Overall Takeaway}  
Personalized sensitivity estimation (PUID) enables fine-grained adaptation to heterogeneous hidden confounding, giving precise corrections when data is sufficient. BPUID stabilizes estimation under high missingness or noisy conditions through benchmark guidance. Both outperform global-bound baselines and naive variants, validating the effectiveness of combining personalization with robustness mechanisms.

\subsection{Experimental Validation of Entropy-Based Assumptions (RQ3, RQ4)}
To address RQ3 and RQ4, we employ the ML-100k dataset, which consists of 100,000 user–item interactions from 943 users and 1,682 items. 

To verify the validity of the proposed entropy-based assumption in quantifying hidden confounding, we design two synthetic experiments. The goal is to assess whether (1) the information gain (IG) from observed confounding and unobserved confounding exhibit consistent directional influence, and (2) the strength of hidden confounding increases as more unobserved variables are introduced.

(1) Experiment I: Proportionality Between Observed and Hidden Information Gain

In the first experiment, we investigate whether the information gain from observed confounding is proportional to that from unobserved confounding. We denote \(R(x_u)\) and \(R(x_i)\) as the proportions of observed user-side and item-side features, respectively. The features are sampled from all 16 pairwise combinations of
\[
R(x_u) \in \left\{ \frac{1}{2}, \frac{1}{3}, \frac{1}{4}, \frac{1}{5} \right\}
\quad \text{and} \quad
R(x_i) \in \left\{ \frac{1}{2}, \frac{1}{3}, \frac{1}{4}, \frac{1}{5} \right\}.
\]

For each setting, we compute the following two ratios:
\begin{equation}
\begin{aligned}
\text{Ratio}_1 &= \frac{H(o_{u,i}) - H(o_{u,i} \mid x_u)}{H(o_{u,i} \mid x_u, x_i) - H(o_{u,i} \mid x_u, x_i, h_u)} ,\\
\text{Ratio}_2 &= \frac{H(o_{u,i} \mid x_u) - H(o_{u,i} \mid x_u, x_i)}{H(o_{u,i} \mid x_u, x_i, h_u) - H(o_{u,i} \mid x_u, x_i, h_u, h_i)}.
\end{aligned}
\label{eq:entropy-ratios}
\end{equation}

Positive values of $\text{Ratio}_1$ and $\text{Ratio}_2$ suggest that the observed and hidden confounding contribute to entropy reduction in the same direction. Moreover, their magnitudes may indicate proportional relationships between the information gain from observed and unobserved variables.

Figure~\ref{fig:ig-ratios} visualizes the values of $\text{Ratio}_1$ and $\text{Ratio}_2$ across 16 combinations of user-side ($R(x_u)$) and item-side ($R(x_i)$) feature observability using heatmaps. We observe that all values are positive, indicating that both observed and hidden confounding reduce entropy in the same direction. This consistent sign across all settings provides strong empirical support for the assumption that observed and hidden variables contribute aligned information gains.

\begin{figure}[htbp]
\centering
\includegraphics[width=0.8\linewidth]{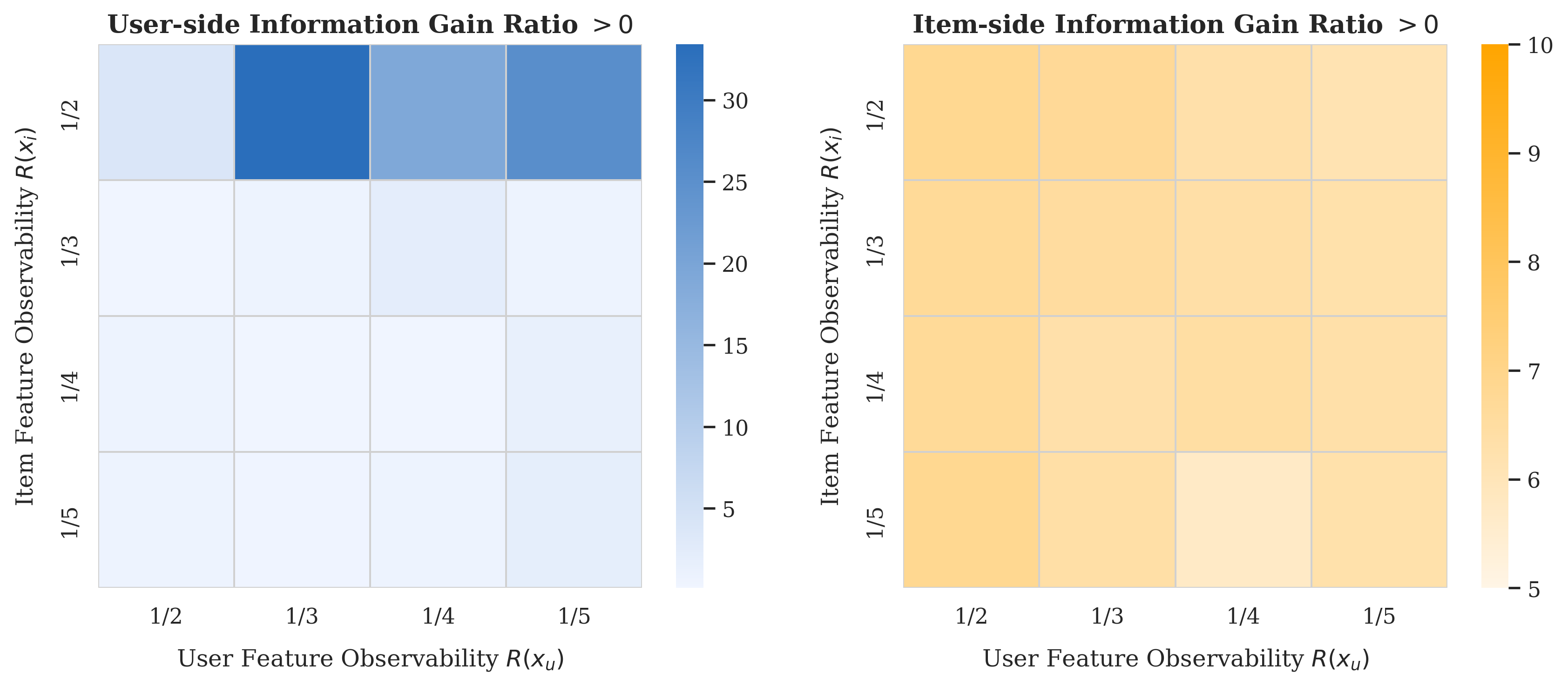}
\caption{
Heatmaps of information gain ratios under varying user-side ($R(x_u)$) and item-side ($R(x_i)$) feature observability. 
\textbf{(Left)} User-side ratio values (0.12--33.42) scaled to 5--10 range, showing relative contribution of observed user features. 
\textbf{(Right)} Item-side log-scaled ratios (5--10 in $\log_{10}$ scale), indicating magnitude of item feature contributions. 
Warmer colors represent stronger effects, with consistent positive values across all configurations suggesting stable entropy reduction from both observed and unobserved confounding.The consistently positive pattern across all combinations suggests a stable and directionally aligned contribution of observed and unobserved confounding to entropy reduction.
}
\label{fig:ig-ratios}
\end{figure}

(2) Experiment II: Monotonicity of Hidden Confounding Strength

The second experiment investigates whether the strength of hidden confounding increases as the number of unobserved variables grows. We design two sub-experiments:

\textbf{Experiment II-a:} Fix all item-side features as observed, and gradually increase the number of unobserved user features. We compute the entropy reduction:
\begin{equation}
\Delta H_{\text{user}} = H(o_{u,i} \mid x_u, x_i) - H(o_{u,i} \mid x_u, x_i, h_u).
\label{eq:delta-h-user}
\end{equation}

\textbf{Experiment II-b:} Fix all user-side features as observed, and gradually increase the number of unobserved item features. We compute the entropy reduction:
\begin{equation}
\Delta H_{\text{item}} = H(o_{u,i} \mid x_u, x_i, h_u) - H(o_{u,i} \mid x_u, x_i, h_u, h_i).
\label{eq:delta-h-item}
\end{equation}

Figure~\ref{fig:gain-hu-hi} shows that as the number of unobserved user features increases, the corresponding information gain steadily grows, indicating a monotonic effect of user-side hidden confounding on predictive uncertainty.
Similarly, increasing the number of unobserved item features leads to a consistent rise in information gain, confirming that item-side hidden confounding also exert greater influence as their dimensionality expands.

\begin{figure}[htbp]
\centering
\includegraphics[width=0.85\linewidth]{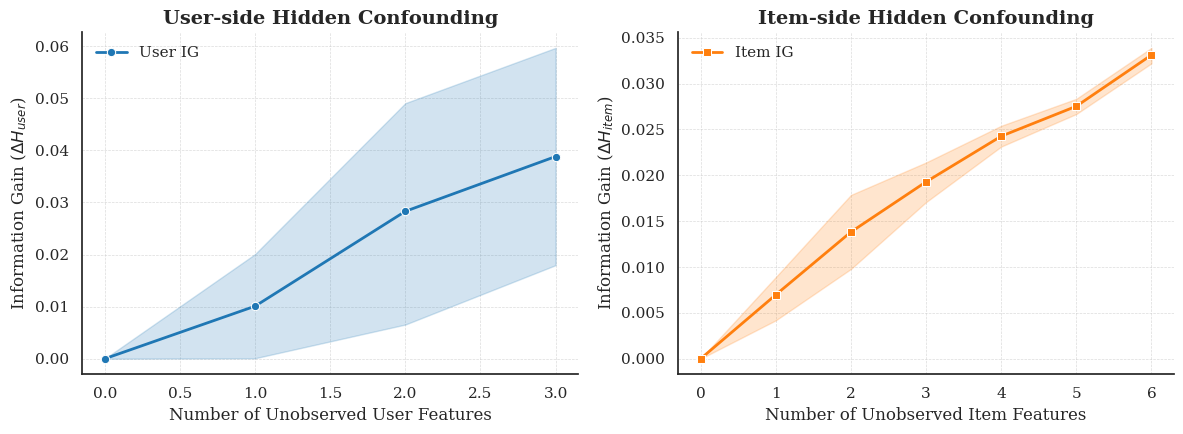}
\caption{The information gain from unobserved confounding increases consistently with the number of hidden user features (\textbf{left}) and item features (\textbf{right}). This confirms the monotonic influence of hidden confounding on the prediction uncertainty, providing empirical evidence for modeling hidden confounding strength via feature cardinality.}
\label{fig:gain-hu-hi}
\end{figure}

(3) Summary

The above two experiments provide empirical evidence to support our entropy-based assumption: (1) observed and unobserved confounding affect exposure entropy in a directionally consistent and potentially proportional manner, and (2) the strength of hidden confounding increases with the number of hidden variables. These findings establish a solid foundation for estimating personalized sensitivity bounds $\Gamma_{u,i}$ via entropy gap measures.

\subsection{Ablation Study}
To better understand the contributions of key components in our framework, we conduct systematic ablation studies on the Coat dataset, focusing on three aspects:

\begin{itemize}
\item \textbf{Feature-aware Representation}: Comparing models with side features versus ID-only embeddings.
\item \textbf{Personalized Bounds}: Comparing personalized bounds versus global bounds within IPS.
\item \textbf{Sensitivity to $\alpha, \beta$}: Examining how the personalized bounds, modulated by coefficients $\alpha$ and $\beta$, influence performance.
\end{itemize}

\paragraph{Results on IPS-based Models.}
Figure~\ref{fig:ablation} (left) reports results for IPS-based models. Somewhat unexpectedly, removing side features (PUID-IPS without feature) yields better UAUC and NDCG@5, suggesting that for the small and sparse Coat dataset, side information may introduce noise rather than benefits. Furthermore, different personalized bounds leads to different performance. This highlights the importance of tuning the sensitivity bounds to adapt to dataset characteristics. The larger $\alpha, \beta$ make the model more tolerant to potential exposure bias, which appears beneficial in sparse settings like Coat.

\paragraph{Results on DR-based Models.}
Figure~\ref{fig:ablation} (right) presents DR-based results. The performance difference between using or omitting side features is marginal, suggesting DR is inherently more robust to sparse or noisy side information. Moreover, varying $(\alpha, \beta)$ has negligible impact on DR performance, in contrast to IPS. The RD-DR baseline performs similarly, further confirming DR's relative insensitivity to personalized bounds in this case.

\vspace{0.5em}
\noindent\textbf{Overall Findings:} These results suggest that (i) ID-only representations remain competitive under data sparsity; (ii) personalized bounds controlled by $(\alpha, \beta)$ help mitigate bias, especially for IPS; and (iii) the sensitivity to $(\alpha, \beta)$ is more pronounced for IPS but less critical for DR.


\begin{figure}[htbp]
\centering
\includegraphics[width=0.9\linewidth]{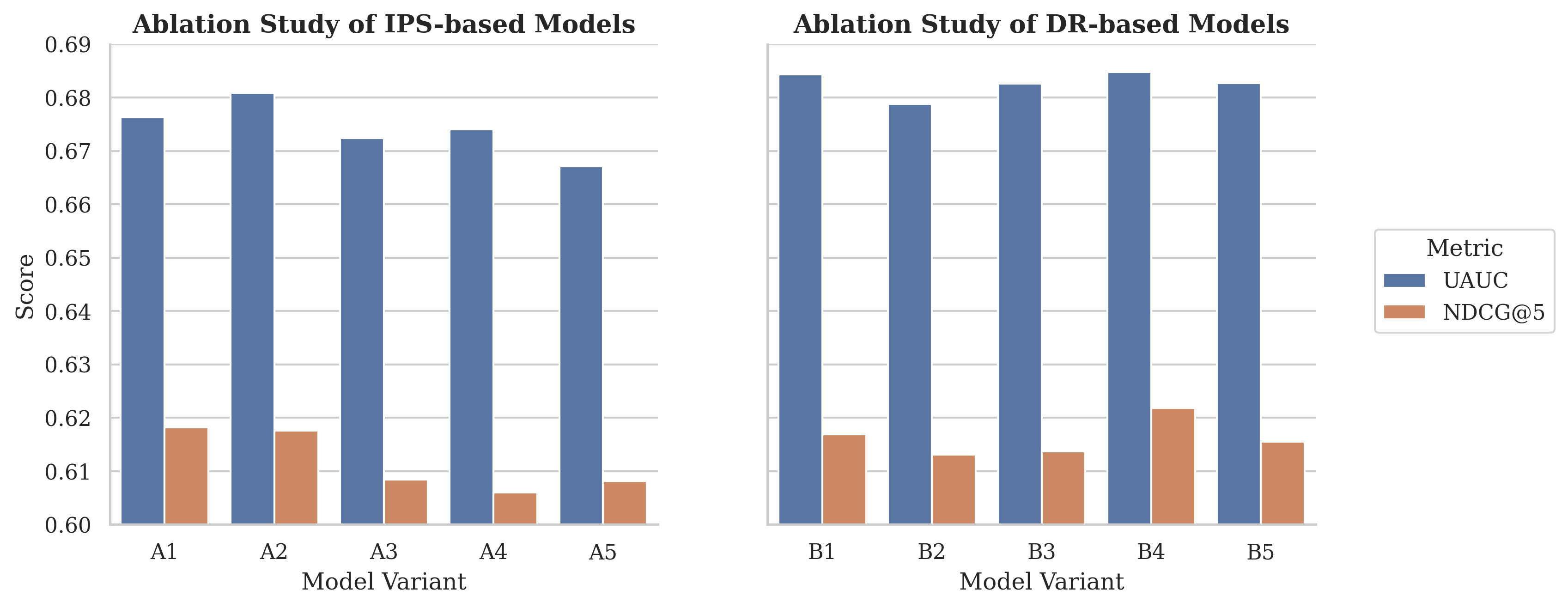}
\caption{
Comparative analysis of model variants under the IPS (A-series) and DR (B-series) frameworks on the Coat dataset. The IPS-based models comprise: A1 (baseline PUID-IPS with side features, $\alpha=2,\beta=5$), A2 (PUID-IPS without side features), A3 (PUID-IPS with $\alpha=5,\beta=10$), A4 (PUID-IPS with $\alpha=5,\beta=5$), and A5 (RD-IPS with random bounds as baseline). The DR-based models include: B1 (baseline PUID-DR with side features, $\alpha=2,\beta=5$), B2 (PUID-DR without side features), B3 (PUID-DR with $\alpha=5,\beta=10$), B4 (PUID-DR with $\alpha=5,\beta=5$), and B5 (RD-DR with random bounds as baseline). This study examines three key aspects: (1) \textbf{feature-aware representation} (comparing A1 vs A2 and B1 vs B2), (2) \textbf{degree of personalization} (contrasting PUID variants against RD baselines), and (3) \textbf{boundary sensitivity} (evaluating different $(\alpha,\beta)$ configurations). Results demonstrate that personalized bounds consistently outperform global bounds, the impact of side features varies significantly between frameworks, and careful tuning of $(\alpha,\beta)$ parameters proves critical for optimal performance.
}
\label{fig:ablation}
\end{figure}

\section{Conclusion}
In this work, we address the challenge of hidden confounding in recommender systems, where standard propensity-based methods, such as IPW and doubly robust estimators, are biased because the true propensities cannot be accurately estimated. Existing approaches, including Robust Deconfounder (RD), leverage sensitivity analysis to estimate an uncertainty set of the true propensity, but they assume homogeneous confounding strength across all user–item pairs, which fails to capture heterogeneity in real-world data. To overcome this limitation, we propose \textbf{PUID}, which estimates personalized sensitivity bounds at the user–item level using an entropy-based information gain measure, enabling the model to adapt to varying degrees of hidden confounding. We assume that the effect of unobserved confounders can be reflected by the portion of exposure uncertainty not explained by observed features. Building on this, we develop \textbf{BPUID}, a benchmark-guided variant that incorporates pre-trained models to stabilize learning while maintaining robustness and predictive accuracy. Extensive experiments on three real-world datasets demonstrate that PUID and BPUID consistently outperform state-of-the-art baselines under hidden confounding. Future work includes improving the estimation of personalized information gains, extending the framework to sequential or context-aware recommendation, and exploring automatic tuning of sensitivity hyperparameters across different scenarios.

\section{Acknowledgements}

I would like to express my sincere gratitude to Li Haoxuan, Su Wanting, Xiao Yanghao, Zheng Chunyuan, Li Xiang, and Xia Tianyu for their valuable contributions, engaging discussions, and continuous support throughout this research. I am grateful for their collaboration and insights that helped shape this work. I would also like to clarify that this study is intended to be incorporated as part of my graduation thesis at Guangdong University of Technology, and I do not plan to pursue further publication of this specific research beyond its use for my degree. All names are listed here in no particular order (names have been shuffled), and I sincerely appreciate each of their generous help.

\bibliography{ref}

\section{Appendix}
\appendix
\label{appendix:feature_processing}
\section{Feature Processing Details}

\subsection{a. \textbf{Feature Selection Based on Correlation}}
To mitigate collinearity caused by feature redundancy during model training, we adopt a Pearson correlation-based feature selection strategy prior to discretization. This approach calculates correlation coefficients between numerical features and groups them by similarity. Only one representative feature from each group is retained, thereby simplifying the input space and improving model stability and generalization capability.

For \textbf{item-side features}, any feature group with a Pearson correlation coefficient exceeding 0.9 is considered highly redundant. One representative feature from each such group is preserved. The final high-correlation groups are listed below:
\begin{itemize}
\item Play Behavior Group: \{show\_cnt, show\_user\_num, play\_cnt, play\_user\_num, complete\_play\_cnt, complete\_play\_user\_num, valid\_play\_cnt, valid\_play\_user\_num, short\_time\_play\_cnt, short\_time\_play\_user\_num, 
long\_time\_play\_cnt, 

long\_time\_play\_user\_num\}
\item Like Behavior Group: \{like\_cnt, like\_user\_num, double\_click\_cnt, click\_like\_cnt\}
\item Cancel Like Group: \{cancel\_like\_cnt, cancel\_like\_user\_num\}
\item Comment Group: \{comment\_cnt, comment\_user\_num, direct\_comment\_cnt\}
\item Delete Comment Group: \{delete\_comment\_cnt, delete\_comment\_user\_num\}
\item Comment Like Group: \{comment\_like\_cnt, comment\_like\_user\_num\}
\item Follow Group: \{follow\_cnt, follow\_user\_num\}
\item Cancel Follow Group: \{cancel\_follow\_cnt, cancel\_follow\_user\_num\}
\item Share Group: \{share\_cnt, share\_user\_num\}
\item Download Group: \{download\_cnt, download\_user\_num\}
\item Report Group: \{report\_cnt, report\_user\_num\}
\item Reduce Recommendation Group: \{reduce\_similar\_cnt, reduce\_similar\_user\_num\}
\item Collect Group: \{collect\_cnt, collect\_user\_num\}
\item Cancel Collect Group: \{cancel\_collect\_cnt, cancel\_collect\_user\_num\}
\end{itemize}

From each group, only a single feature is retained for downstream modeling. For example, \texttt{valid\_play\_user\_num} is selected from the Play Behavior Group.

For \textbf{user-side features}, only one highly correlated pair is detected: \{onehot\_feat2, onehot\_feat3\} (Pearson correlation $> 0.7$).

The final selected features are as follows:
\begin{itemize}
\item \textbf{Item-side}: video\_id, video\_type, upload\_type, visible\_status, video\_tag\_name, valid\_play\_user\_num, play\_progress, click\_like\_cnt, cancel\_like\_user\_num, direct\_comment\_cnt, comment\_like\_user\_num, follow\_cnt, cancel\_follow\_user\_num, share\_cnt, download\_user\_num, report\_cnt, reduce\_similar\_user\_num, collect\_cnt, cancel\_collect\_user\_num
\item \textbf{User-side}: user\_active\_degree, is\_live\_streamer, follow\_user\_num\_range, friend\_user\_num\_range, register\_days\_range, onehot\_feat0, onehot\_feat6, onehot\_feat11, onehot\_feat12, onehot\_feat13, onehot\_feat14, onehot\_feat15, onehot\_feat16, onehot\_feat17
\end{itemize}

This feature selection mechanism reduces the dimensionality of the feature space, minimizes redundancy, and mitigates overfitting, thereby providing a more compact and stable input foundation for model training.

\subsection{b. \textbf{Discretization of Features}}
To standardize numerical distributions and enhance model adaptability to heterogeneous inputs, we discretize both categorical and numerical features on the user and item sides. The general strategy is as follows: non-numeric features are processed using manual or grouped mappings, while numeric features are primarily binned using \texttt{qcut} based on quantiles. In the presence of extreme outliers or dense zero values, we apply conditional binarization or fall back to equal-width binning. All features are ultimately converted to integer form to support embedding operations during training.

\textbf{Part 1: Feature-wise Processing of \texttt{item features}} \\
\textit{Note: Binning summary \{m: n\} indicates that group label $m$ contains $n$ samples.}
\begin{itemize}
\item \texttt{video\_id}: Identifier reserved for indexing or joins; no discretization applied.
\item \texttt{video\_type}: 'AD' $\rightarrow$ 0; others $\rightarrow$ 1. Binning: \{1: 342188, 0: 1153\}.
\item \texttt{upload\_type}: 'ShortImport' $\rightarrow$ 0; others $\rightarrow$ 1. Binning: \{0: 201962, 1: 141379\}.
\item \texttt{visible\_status}: Ordered categorical: 'only friends' $\rightarrow$ 0, 'private' $\rightarrow$ 1, 'public' $\rightarrow$ 2. Binning: \{2: 340782, 1: 2515, 0: 44\}.
\item \texttt{video\_tag\_name}: High cardinality; top tag $\rightarrow$ bin 0, others binned via \texttt{qcut} into 4 groups, NaNs $\rightarrow$ -1, bins 0--3 merged into category 1. Binning: \{4.0: 232329, -1.0: 32434, 1.0: 78578\}.
\item \texttt{valid\_play\_user\_num}: 0 isolated, others \texttt{qcut}-binned into 3. Adjusted: bin 3 $\rightarrow$ 2, bin 0 $\rightarrow$ 1. Final bins: \{2: 171598, 1: 171743\}.
\item \texttt{play\_progress}: \texttt{qcut} binned into \{3,2,1,0\}, then remapped 3 $\rightarrow$ 2, 0 $\rightarrow$ 1. Final bins: \{2: 171670, 1: 171671\}.
\item \texttt{click\_like\_cnt}: \texttt{qcut}, bin 2 $\rightarrow$ 1. Final: \{1: 171024, 0: 172317\}.
\item \texttt{cancel\_like\_user\_num}: \texttt{qcut}, bin 2 $\rightarrow$ 1. Final: \{1: 171669, 0: 171672\}.
\item \texttt{direct\_comment\_cnt}: Binarized: 0 vs. non-zero. \{1: 140857, 0: 202484\}.
\item \texttt{comment\_like\_user\_num}: Binarized. \{1: 150595, 0: 192746\}.
\item \texttt{follow\_cnt}: Binarized. \{1: 179610, 0: 163731\}.
\item \texttt{cancel\_follow\_user\_num}: Binarized. \{0: 338216, 1: 5125\}.
\item \texttt{share\_cnt}: Binarized. \{1: 124567, 0: 218774\}.
\item \texttt{download\_user\_num}: Binarized. \{1: 127294, 0: 216047\}.
\item \texttt{report\_cnt}: Binarized. \{0: 337041, 1: 6300\}.
\item \texttt{reduce\_similar\_user\_num}: Binarized. \{1: 148913, 0: 194428\}.
\item \texttt{collect\_cnt}: Binarized. \{1: 152523, 0: 190818\}.
\item \texttt{cancel\_collect\_user\_num}: Binarized. \{1: 130345, 0: 212996\}.
\end{itemize}

\textbf{Part 2: Feature-wise Processing of \texttt{user features}} \\
\textit{Note: \{m: n\} indicates that category $m$ contains $n$ samples.}
\begin{itemize}
\item \texttt{user\_active\_degree}: 'full\_active' $\rightarrow$ 1, 'high\_active' / 'middle\_active' $\rightarrow$ 2, 'UNKNOWN' $\rightarrow$ 0. \{2: 966, 1: 6092, 0: 118\}.
\item \texttt{is\_live\_streamer}: Boolean; used as-is. \{0: 7127, -1: 49\}.
\item \texttt{follow\_user\_num\_range}: '0','(0,10]','(10,50]' $\rightarrow$ 0; others $\rightarrow$ 1. \{0: 4155, 1: 3021\}.
\item \texttt{friend\_user\_num\_range}: '0' $\rightarrow$ 0; others $\rightarrow$ 1. \{0: 4338, 1: 2838\}.
\item \texttt{register\_days\_range}: '15--30','31--60','61--90','91--180' $\rightarrow$ 0; others $\rightarrow$ 1. \{0: 2936, 1: 4240\}.
\item \texttt{onehot\_feat0}, \texttt{onehot\_feat6}, \texttt{onehot\_feat11}--\texttt{onehot\_feat17}: Encrypted one-hot features; retained as-is. Example distributions: \{0: 4361, 1: 2815\}, \{1: 4040, 0: 3120, 2: 16\}, \{0: 6665, 2: 473, 1: 38\}, etc.
\end{itemize}

\section{Theoretical Foundations of BPUID Framework}

\subsection{Theoretical Justification for Benchmark Extension}

\textbf{Assumption 1 (Lipschitz Continuity of Confounding Bias).} 
The bias function $B(\phi) = \mathbb{E}[\hat{R}(\phi)] - R(\phi)$ satisfies:
\[
|B(\phi_1) - B(\phi_2)| \leq L \cdot \|\phi_1 - \phi_2\|,
\]
for some Lipschitz constant $L > 0$ and appropriate parameter norm $\|\cdot\|$.

\textbf{Theorem 1 (Incremental Estimation Robustness).} 
Under Assumption 1, for any benchmark model $\phi^{(0)}$, the estimation error of the incremental risk $\Delta R(\phi,\phi^{(0)}) = R(\phi) - R(\phi^{(0)})$ satisfies:
\[
|\mathbb{E}[\Delta \hat{R}(\phi)] - \Delta R(\phi)| \leq L \cdot \|\phi - \phi^{(0)}\|,
\]
where $\Delta \hat{R}(\phi) = \hat{R}(\phi) - \hat{R}(\phi^{(0)})$.

\textbf{Proof.} 
\begin{align*}
\mathbb{E}[\Delta \hat{R}(\phi)] - \Delta R(\phi) 
&= \left(\mathbb{E}[\hat{R}(\phi)] - R(\phi)\right) - \left(\mathbb{E}[\hat{R}(\phi^{(0)})] - R(\phi^{(0)})\right) \\
&= B(\phi) - B(\phi^{(0)}) \\
&\leq L \cdot \|\phi - \phi^{(0)}\|.
\end{align*}
\hfill $\square$

\subsection{Confounding Adjustment with Tighter Bounds}

\textbf{Assumption 2 (Rosenbaum Sensitivity Model).} 
There exists $\Gamma \geq 1$ such that for all $(u,i)$ with the same observed features $X$:
\[
\frac{1}{\Gamma} \leq \frac{\pi_{u,i}^*/(1-\pi_{u,i}^*)}{\hat{\pi}_{u,i}/(1-\hat{\pi}_{u,i})} \leq \Gamma,
\]
where $\pi_{u,i}^* = P(O_{u,i}=1|X,H)$ is the true propensity and $\hat{\pi}_{u,i} = P(O_{u,i}=1|X)$ is the estimated propensity.

\textbf{Theorem 2 (Tighter Bias Bound under Sensitivity Analysis).} 
Under Assumption 2, the worst-case bias of the BPUID estimator satisfies:
\[
|B_{\text{BPUID}}(\phi)| \leq C \cdot (\Gamma-1) \cdot \|\Delta e\|_\infty,
\]
while the standard PUID estimator satisfies:
\[
|B_{\text{PUID}}(\phi)| \leq C \cdot \Gamma \cdot \|e(\phi)\|_\infty,
\]
where $C$ is a constant depending only on the data distribution, and $\Delta e = e(\phi) - e(\phi^{(0)})$.

\textbf{Proof.} 
For any weight function $w^* = \pi^*/\hat{\pi}$ satisfying Assumption 2, we have:
\begin{align*}
|B_{\text{BPUID}}(\phi)| &= |\mathbb{E}[w^*\Delta e] - \mathbb{E}[\Delta e]| \\
&= |\mathbb{E}[(w^*-1)\Delta e]| \\
&\leq \mathbb{E}[|w^*-1| \cdot |\Delta e|] \\
&\leq (\Gamma-1) \cdot \mathbb{E}[|\Delta e|] \\
&\leq C \cdot (\Gamma-1) \cdot \|\Delta e\|_\infty.
\end{align*}
For the standard PUID estimator, the same analysis yields $|B_{\text{PUID}}(\phi)| \leq C \cdot \Gamma \cdot \|e(\phi)\|_\infty$ with $\Delta e$ replaced by $e(\phi)$. \hfill $\square$

\subsection{Guarantee of Accuracy Improvement}

\textbf{Assumption 3 (Bounded Loss Variance).} 
The loss function satisfies $\text{Var}(\mathcal{L}(\phi)) \leq \sigma^2$ for all $\phi$ in the feasible parameter space.

\textbf{Theorem 3 (Generalization Bound).} 
Under Assumptions 1 and 3, let $\phi^*$ minimize $\mathcal{L}_{\text{BPUID}}(\phi)$ with $\mathcal{L}_{\text{BPUID}}(\phi^*) = -\delta < 0$. Then with probability at least $1-\eta$:
\[
R(\phi^*) \leq R(\phi^{(0)}) + L \|\phi^* - \phi^{(0)}\| - \delta + \sqrt{\frac{2\sigma^2 \log(2/\eta)}{n}}.
\]

\textbf{Proof.} 
Define $\epsilon(\phi) = \mathcal{L}(\phi) - \mathbb{E}[\mathcal{L}(\phi)]$. Then:
\begin{align*}
R(\phi^*) - R(\phi^{(0)}) 
&= [\mathcal{L}(\phi^*) - \mathcal{L}(\phi^{(0)})] + [B(\phi^*) - B(\phi^{(0)})] + [\epsilon(\phi^{(0)}) - \epsilon(\phi^*)] \\
&\leq -\delta + L\|\phi^* - \phi^{(0)}\| + |\epsilon(\phi^{(0)}) - \epsilon(\phi^*)|.
\end{align*}
By Hoeffding's inequality, with probability at least $1-\eta$:
\[
|\epsilon(\phi^{(0)}) - \epsilon(\phi^*)| \leq \sqrt{\frac{2\sigma^2 \log(2/\eta)}{n}}.
\]
Combining these inequalities yields the theorem statement. \hfill $\square$

\textbf{Corollary 3.1 (Accuracy Improvement Condition).} 
From Theorem 3, $R(\phi^*) < R(\phi^{(0)})$ holds when:
\[
\delta > L \|\phi^* - \phi^{(0)}\| + \sqrt{\frac{2\sigma^2 \log(2/\eta)}{n}}.
\]

\textbf{Proof.} 
From Theorem 3, $R(\phi^*) < R(\phi^{(0)})$ requires:
\[
L \|\phi^* - \phi^{(0)}\| - \delta + \sqrt{\frac{2\sigma^2 \log(2/\eta)}{n}} < 0,
\]
which rearranges to the stated condition. \hfill $\square$

\end{document}